\documentclass[sigconf]{acmart}
\AtBeginDocument{%
  }

\setcopyright{acmlicensed}
\copyrightyear{2026}
\acmYear{2026}
\acmDOI{XXXXXXX.XXXXXXX}
\acmConference[CIKM '26]{The 35th ACM International Conference on Information and Knowledge Management}{November 9--11, 2026}{Rome, Italy}
\acmISBN{978-1-4503-XXXX-X/2018/06}

\usepackage[most]{tcolorbox}
\usepackage{multirow}
\usepackage[table]{xcolor}
\definecolor{verylightgray}{gray}{0.95}

\tcbset{
  bluebox/.style={
    colback=blue!3,
    colframe=blue!60!black,
    coltitle=white,
    fonttitle=\bfseries,
    boxrule=0.6pt,
    arc=2mm,
    left=1mm,
    right=1mm,
    top=1mm,
    bottom=1mm
  }
}

\newtheorem{consequence}{Consequence}

\newcommand{\vecT}{\boldsymbol{\theta}}
\newcommand{\matW}{\mathbf{W}}
\newcommand{\vecH}{\mathbf{h}}

\newcommand{\vecP}{\mathbf{p}}
\newcommand{\vecX}{\mathbf{x}}
\newcommand{\vecY}{\mathbf{y}}
\newcommand{\vecW}{\mathbf{w}}
\newcommand{\batchSet}{\mathcal{B}}
\newcommand{\pCoef}{C_{\hat{p}}}
\newcommand{\yCoef}{C_{y}}

\begin{document}

\title{Batch Size or Negatives? A Selection Rule for Memory-Constrained Recommender Training}

\author{Artyom Sabitov}
\authornote{All authors contributed equally to this research.}
\affiliation{%
  \institution{Moscow Independent Research Institute of Artificial Intelligence}
  \city{Moscow}
  \country{Russia}}
\affiliation{
  \institution{Intellectual data analysis and predictive modeling institute}
  \city{Moscow}
  \country{Russia}}
\email{artyom.a.sabitov@gmail.com}

\author{Daniil Volkov}
\authornotemark[1]
\affiliation{%
  \institution{Applied AI Institute}
  \city{Moscow}
  \country{Russia}
}
\email{danvol121@gmail.com}

\author{Alexey Zaytsev}
\authornotemark[1]
\affiliation{%
  \institution{Applied AI Institute,}
  \city{Moscow}
  \country{Russia}
}
\affiliation{
  \institution{Risk department}
  \city{Moscow}
  \country{Russia}}
\email{likzet@gmail.com}

\renewcommand{\shortauthors}{Sabitov et al.}

\begin{abstract}
Large-scale neural recommender systems are typically trained with a softmax cross-entropy objective over the full item vocabulary. 
For a typical large number of possible items $K$, the final classification layer dominates memory, requiring $O(nK)$ logits and gradients to materialize for a batch of $n$ examples. 
Sampled softmax reduces this cost by restricting the objective to only $k \ll K$ candidate negative items, resulting in an $O(nk)$ memory. 
However, for a fixed budget $B = n k$, it remains unclear whether one should prioritize larger batches or the inclusion of more negative items.

We address this question by analyzing sampled-softmax training under a fixed memory constraint. 
Under standard smoothness and variance assumptions, our theoretical evidence suggests that the fastest convergence arises from an $ n \sim B, k \sim 1$ allocation.
So, an actionable rule is to include as many objects as possible given computational constraints.

Our theory is supported by controlled synthetic and synthetic and four real sequential recommendation benchmarks, including MovieLens-20M. 
The suggested configuration achieve faster convergence and better final recommendation quality than imbalanced alternatives within the same memory constraint. 
These findings provide a theoretical and empirical foundation for configuring memory during the training of recommender systems.
\footnote{Code, reproducibility materials, and all scripts for generating figures are available at \url{https://anonymous.4open.science/r/LimitedMemoryRule-BBFB}}
\end{abstract}

\begin{CCSXML}
<ccs2012>
 <concept>
  <concept_id>10002951.10003227.10003251.10003252</concept_id>
  <concept_desc>Information systems~Recommender systems</concept_desc>
  <concept_significance>500</concept_significance>
 </concept>
 <concept>
  <concept_id>10010147.10010257.10010293.10010294</concept_id>
  <concept_desc>Computing methodologies~Neural networks</concept_desc>
  <concept_significance>300</concept_significance>
 </concept>
 <concept>
  <concept_id>10003752.10003809.10003636.10003708</concept_id>
  <concept_desc>Theory of computation~Optimization algorithms</concept_desc>
  <concept_significance>300</concept_significance>
 </concept>
 <concept>
  <concept_id>10002951.10003227.10003251.10003304</concept_id>
  <concept_desc>Information systems~Retrieval models and ranking</concept_desc>
  <concept_significance>100</concept_significance>
 </concept>
</ccs2012>
\end{CCSXML}

\ccsdesc[500]{Information systems~Recommender systems}
\ccsdesc[300]{Computing methodologies~Neural networks}
\ccsdesc[300]{Theory of computation~Optimization algorithms}
\ccsdesc[100]{Information systems~Retrieval models and ranking}
\keywords{Sequential Recommendation, Cross-Entropy Loss Optimization, Negative Sampling, Sampled Softmax, In-Batch Negatives, Training Efficiency, Memory-Efficient Training}


\maketitle

\section{Introduction}

Recommender systems are a central application area of modern machine learning, powering search, ranking, advertising, media platforms, and e-commerce services~\cite{ricci2021recommender,raza2026comprehensive}. 
In recent years, sequential recommendation has been strongly influenced by transformer-based architectures such as SASRec~\cite{kang2018self} and BERT4Rec~\cite{sun2019bert4rec}, which model user histories as sequences and predict the next relevant item from a large catalog. 
A common and highly effective training formulation treats recommendation as a multi-class classification problem over all items: for each user context, the model predicts a probability distribution over a catalog of size $K$ using a softmax cross-entropy objective during training~\cite{kang2018self,sun2019bert4rec,klenitskiy2023turning,wu2024effectiveness}.

This formulation is statistically attractive but computationally expensive. 
When the item catalog is large, the final classification layer requires materializing logits and gradients of size proportional to the batch size times the number of classes. 
For a batch of $n$ examples, the full softmax objective therefore incurs an $O(nK)$ memory and compute footprint. 
In large-catalog recommender systems, this final softmax layer often becomes the dominant memory bottleneck, limiting the feasible batch size and slowing down training. 
This issue is particularly pronounced for transformer-based sequential recommenders, where strong architectures are increasingly trained on large datasets and large item vocabularies~\cite{klenitskiy2023turning,mezentsev2024scalable,zhelnin2026faster}.

A standard way to reduce this cost is to replace the full softmax with a sampled objective. 
Instead of computing the normalization over all $K$ items, sampled softmax evaluates the loss only on a subset of $k \ll K$ candidate classes~\cite{jean2015using,blanc2018adaptive,rawat2019sampled}. 
A particularly efficient variant uses in-batch negatives, where positive examples from other mini-batch items are reused as negative candidates. 
Sampling-based objectives are also widely used with bias-correction techniques, such as the logQ correction, to compensate for non-uniform item sampling probabilities~\cite{yi2019sampling,khrylchenko2025logq}. 
Class sampling methods reduce the memory cost from $O(nK)$ to approximately $O(nk)$, making large-catalog training feasible.

However, this introduces a practical question that is not well understood: under a fixed memory budget, should we allocate memory to a larger batch size or to more sampled classes? 
More precisely, if the available memory allows approximately
\[
    B = n \cdot k
\]
logits to be processed per optimization step, then increasing $n$ reduces mini-batch noise by averaging over more training examples, while increasing $k$ improves the approximation of the full softmax by using more negative classes. 
These two choices reduce different sources of stochasticity, and prioritizing only one of them may leave the other as the dominant source of optimization noise. 
Despite the practical importance of this trade-off, current training recipes usually choose $n$ and $k$ by heuristic tuning~\cite{petrov2023gsasrec,mezentsev2024scalable,zhelnin2026faster}.

In this paper, we study this trade-off from the perspective of stochastic first-order optimization. 
Modern recommender systems are typically trained using SGD or adaptive variants such as Adam~\cite{kingma2015adam}. 
For such methods, convergence behavior is strongly affected by the variance of the stochastic gradient estimator~\cite{bottou2018optimization}. 
For example, for a strongly convex and smooth objective and a stochastic gradient variance at most $M$, standard results~\cite{shamir2013stochastic} imply that for SGD at iteration $t$, the regret $R_t$, a difference between the loss at the current iteration $F_t$ and the minimum $F^*$, is such that $R_t \sim O(\frac{M}{T})$.
Thus, the asymptotic error scales linearly with the gradient variance $M$. 
Analogous dependencies on stochastic-gradient noise parameters also appear in modern convergence analyses of Adam-type methods~\cite{wang2024convergence}. 
This motivates a direct analysis of how the gradient variance depends on both the batch size $n$ and the number of sampled classes $k$. 
Unlike the standard mini-batch setting, sampled-softmax training involves two coupled sources of randomness: sampling of training examples and sampling of classes. 
To the best of our knowledge, this two-source variance trade-off under a fixed memory constraint has not been explicitly analyzed in the context of recommender training.

We provide a theoretical convergence analysis focusing on the final softmax layer, which can be viewed as a multi-class logistic regression model on top of learned sequence representations. 
This abstraction permits a tractable theoretical treatment, while still capturing the main memory bottleneck in large-catalog recommendation. 
Our analysis shows that the gradient variance decomposes into \emph{a mini-batch component} controlled by the number of objects $n$ and \emph{a class-sampling component} controlled by the number of sampled negative classes $k$. 
Balancing these two components leads to maximization of the number of considered objects:
\[
    n \asymp B, k \asymp 1.
\]
However, in practice computations become efficient for $k \geq n $ and we use the rule $ n = k = \sqrt{B}$
Our theoretical finding appears to be consistent with empirical results in sequential recommendation: both batch composition and the number of negatives substantially affect optimization dynamics and final ranking quality~\cite{sun2019bert4rec,klenitskiy2023turning,petrov2023gsasrec,wu2024effectiveness}. 
Further empirical real-data experiments provides a more targeted investigation of the $n$-$k$ balance effect.

To sum up, our contributions are as follows:
\begin{itemize}  
    \item \textbf{Framework for convergence analysis for stochastic first order methods.} We formulate sampled-softmax recommender training as stochastic optimization under a fixed memory constraint $nk \leq B$, where $n$ is the batch size and $k$ is the number of sampled classes. This leads to a theoretical framework for analyzing stochastic gradients with two sources of randomness: mini-batch sampling and class sampling. 
    For the last-layer logistic-regression model, this framework yields an explicit variance-based criterion for choosing $n$ and $k$, applicable to both vanilla SGD and Adam~optimizers.
    
    \item \textbf{Theoretically optimal  allocation rule.} We derive a practical allocation, suggesting that we should prioritize sampling examples to sampling classes in typical applied settings. 
    This gives a guideline for configuring sampled-softmax training without an expensive grid search. Given the manifestation rule for gradients, we use $n = k = \sqrt{B}$ in practice.
    
    \item \textbf{Empirical validation of the our rule.} We examine the theoretical prediction experimentally on sequential recommendation benchmarks, including MovieLens-1M and Gowalla. 
    Across SGD and Adam optimizers, proposed configurations converge faster and often achieve better final recommendation quality than strongly imbalanced alternatives under the same memory budget. 
    While we admit that larger-scale experiments would further improve the work, here we focus on a single strong architecture and common datasets to lay an empirical foundation for our theoretical findings.
\end{itemize}

\begin{tcolorbox}[bluebox]
Overall, our theoretical results suggest that efficient large-catalog recommender training depends not only on the choice of architecture, sampling objective, or optimizer, but also on how memory is allocated across the competing sources of stochasticity introduced by sampled-softmax optimization.
\end{tcolorbox}

\section{Related Work}

\subsubsection*{Large-Catalog Training in Sequential Recommendation}

Modern sequential recommender systems are commonly formulated as classification problems over the entire item catalog. 
In transformer-based models such as SASRec and BERT4Rec, this requires computing the softmax over hundreds of thousands or even millions of items. 
Under such conditions, the full cross-entropy objective becomes the primary computational and memory bottleneck, since it requires materializing logits of size $n \times L \times K$, where $n$ is the batch size, $L$ is the sequence length, and $K$ is the catalog size. 
Specifically, this bottleneck was quantified by Mezentsev et al.~\cite{mezentsev2024scalable}, who profiled SASRec training with full cross-entropy for different catalog sizes. 
Their measurements show that CE-related activations grow from $2.86$ GB for a $10$K-item catalog to $28.61$ GB for $100$K items and $286.10$ GB for $1$M items. 
In contrast, all other measured components, including weights, gradients, optimizer states, and non-CE activations, remain much smaller, totaling only $0.25$ GB, $0.34$ GB, and $1.19$ GB for the same catalog sizes. 
Thus, for a $1$M-item catalog, CE-related activations are about $240\times$ larger than all other measured training components combined, making the final softmax computation the dominant memory bottleneck in large-catalog sequential recommendation.

Early approaches typically addressed this issue using negative sampling and binary objectives. 
Recent work goes further, demonstrating that the choice of loss function and negative sampling strategy has a critical impact on the quality of sequential recommendations. 
In particular, Klenitskiy and Vasilev~\cite{klenitskiy2023turning} showed that much of the reported superiority of BERT4Rec over SASRec is explained not by the architecture itself, but by the use of full softmax cross-entropy instead of BCE loss with a single negative sample. 
Their experiments demonstrate that, under the same objective, SASRec can outperform BERT4Rec both in recommendation quality and training efficiency.

This line of research was further extended in gSASRec~\cite{petrov2023gsasrec}, where Petrov and Macdonald analyze the overconfidence effect arising from negative sampling during training. 
The authors show that the standard BCE objective systematically overestimates the probabilities of positive interactions, especially when using a small number of negative samples. To mitigate this effect, they propose a generalized BCE objective and substantially increase the number of negatives used during training.

\subsubsection*{Sampled Softmax and Bias Correction}

Thus, an alternative to BCE-based objectives is sampled softmax, which approximates the full softmax using only a subset of negative classes. 
A particularly popular approach is the use of in-batch negatives, where items from the current mini-batch are reused as negative examples. This approach scales efficiently and makes effective use of GPU parallelism, but introduces bias toward popular items.

To correct this bias, Yi et al.~\cite{yi2019sampling} proposed the logQ correction based on importance weighting with respect to the probability of an item appearing in the batch. This approach has become a standard component of large-scale retrieval and recommendation systems.

Subsequent work demonstrated that the quality of sampled softmax strongly depends on the sampling distribution and the method used to correct the resulting bias. In particular, Blanc and Rendle~\cite{blanc2018adaptive} investigated adaptive sampling schemes for approximating the softmax distribution, while Rawat et al.~\cite{rawat2019sampled} proposed Random Fourier Softmax to reduce the bias of sampled-softmax estimators in large output spaces. Wu et al.~\cite{wu2024sampledsoftmax} further showed that sampled softmax possesses several desirable properties for recommendation systems, including more effective utilization of hard negatives and reduced popularity bias compared to pointwise objectives.

More recently, Khrylchenko et al.~\cite{khrylchenko2025logq} revisited the classical logQ correction and demonstrated that the standard deviation remains biased due to incorrect treatment of the positive item contribution in the sampled-softmax denominator. The authors proposed a refined correction formula and reported improvements on retrieval and recommendation benchmarks.

Overall, the recent literature suggests that increasing the number of negative samples generally leads to more stable optimization and improved ranking quality. However, this also directly increases memory consumption and the computational cost of each training step.

\subsubsection*{Memory-Efficient Cross-Entropy Objectives}

Alongside research on sampling strategies, there has been increasing interest in memory-efficient approximations of the cross-entropy objective. The main goal of these methods is to avoid computing the full logit tensor while preserving the benefits of full-softmax training.

Mezentsev et al.~\cite{mezentsev2024scalable} proposed Scalable Cross-Entropy (SCE), where logit computation is replaced with approximate maximum inner product search (MIPS), significantly reducing memory usage without noticeable degradation in recommendation quality. In RECE~\cite{gusak2024rece}, the authors employ a locality-sensitive hashing-like approximation to reduce the computational cost of the final classification layer. Similar ideas are also used in other scalable recommendation pipelines based on approximate retrieval and efficient softmax approximation.

Another line of work modifies the parameterization of the item space itself. For example, Zivic et al.~\cite{zivic2024scaling} propose eliminating the trainable item embedding table and instead constructing item representations using a fixed feature encoder. This decouples the number of model parameters from the catalog size and facilitates training with sampled softmax objectives.

Taken together, these works demonstrate that recommendation models with extremely large item catalogs can be trained substantially more efficiently without computing the full softmax over all items, while maintaining comparable or even superior recommendation quality.

\subsubsection*{Scaling Laws and Resource Allocation}

In recent years, research on large-scale recommendation has increasingly shifted from isolated architectural improvements toward questions of optimal resource allocation. Following the work of Kaplan et al.~\cite{kaplan2020scaling}, it became common to analyze neural network training through the lens of scaling laws that relate model quality to model size, dataset size, and computational budget.

Similar scaling behaviors were later observed in sequential recommendation models. In particular, Zivic et al.~\cite{zivic2024scaling} showed that transformer-based recommenders exhibit stable power-law relationships with respect to both the number of model parameters and the number of training interactions. These findings suggest the existence of optimal trade-offs between different components of the computational budget.

However, most prior work considers either scaling model size, scaling dataset size, or selecting an appropriate sampling strategy. The question of how to optimally allocate memory between batch size and the number of negative samples under a fixed memory budget remains largely unexplored.

\subsubsection*{Our Contribution Relative to Prior Work}

The work most closely related to ours is that of Zhelnin et al.~\cite{zhelnin2026faster}, who empirically observe a trade-off between batch size and the number of negatives during training of sequential recommenders. Nevertheless, the existing literature still lacks a theoretical understanding of how memory should be optimally distributed between these two quantities.

In this work, we formulate sampled-softmax training as a stochastic optimization problem under the constraint $B = nk$, where $n$ denotes the batch size and $k$ is the number of sampled negatives. We analyze the contributions of mini-batch noise and class-sampling noise to the variance of the gradient estimator and study how the balance between these two sources affects convergence behavior under fixed memory constraints.

\section{Methods}

\subsection{Preliminaries}

\subsubsection{Recommender systems training}

We consider the standard sequential recommendation setting. 
Let $\mathcal{I} = \{1, \ldots, K\}$ be an item catalog of size $K$. 
For each user, we observe a sequence of past interactions
\[
    (i_1, i_2, \ldots, i_L),
    \qquad i_\ell \in \mathcal{I},
\]
and the goal is to predict the next relevant item $i_{L+1}$. 
Equivalently, for every user context, the model outputs a probability distribution over all catalog items and is trained to assign high probability to the observed target item.

Let $f_{\boldsymbol{\theta}}$ be a sequential recommender model with parameters $\boldsymbol{\theta}$, for example, a transformer-based architecture such as SASRec. 
Given a user history, the model produces a representation vector
\[
    \mathbf{h} = f_{\vecT}(i_1,\ldots,i_L) \in \mathbb{R}^d
\]
for a training example. 
The final prediction layer maps this representation to logits over the item catalog:
\[
    \mathbf{l} = \matW \mathbf{h},
    \qquad
    \matW \in \mathbb{R}^{K \times d},
\]
where each row of $\matW$ corresponds to one item. 
The predicted distribution is then obtained using the softmax function:
\[
    \mathbf{p}
    =
    \mathrm{softmax}(\mathbf{l}).
\]
If $\mathbf{y} \in \{0, 1\}^K$ is the one-hot vector of the target item, the usual full-softmax cross-entropy loss over a mini-batch $\batchSet$ is
\[
    \mathcal{L}(\vecT,\matW)
    =
    -\frac{1}{n}
    \sum_{j \in \batchSet}
    \mathbf{y}_{j}^{\top} \log \mathbf{p}_{j}.
\]
In practice, the parameters $\vecW = (\boldsymbol{\theta},\mathbf{W})$ are optimized by stochastic first-order methods. 

At each iteration, these methods use a stochastic gradient computed on a mini-batch rather than the full dataset. 
Additionally, for large item catalogs, computing the full softmax requires materializing logits for all $K$ classes for each example in the batch, leading to memory and computational costs proportional to $nK$. 
Large-scale recommender training often replaces the full item set with a sampled subset of $k \ll K$ classes. 
This gives a sampled-softmax or negative-sampling objective whose memory footprint is proportional to $nk \ll nK$.
Thus, in this scenario, stochasticity comes from two sources of uncertainty: selection of a subset of classes and selection of a batch.


The following subsections recall standard convergence results for SGD and Adam. 
These results show that the convergence behavior of stochastic first-order methods is governed by the variance of the gradient estimator. 
This motivates our later analysis of how the gradient variance depends on the pair $(n, k)$.

\subsubsection{SGD}
Consider stochastic gradient descent (SGD) for minimization of an objective function $F (\vecW): \mathbb{R}^d \to \mathbb{R}$:
\[
    \vecW_{t + 1} = \vecW_t - \alpha_t \mathbf{g}_t,
\]
where $\vecW_t \in \mathbb{R}^d$ is the parameter vector at iteration $t$, $\alpha_t > 0$ is the step size, and $\mathbf{g}_t$ is a stochastic gradient estimate for $F$. 
Denote $\vecW^* \in \arg\min_\matW F(\vecW)$, $F^* := F(\vecW^*)$ and $\vecW_1$ the starting point of the SGD.
The goal of SGD is to make regret $R_t = F(\vecW_t) - F^*$ small and converge for $t$ increasing to infinity.


\begin{theorem}[from \cite{bottou2018optimization}]
Let $F$ be $c$-strongly convex and $L$-smooth. Suppose SGD uses a conditionally unbiased gradient estimate $\mathbb{E}[\mathbf{g}_t \mid \vecW_t] = \nabla F(\vecW_t)$ with uniformly bounded conditional variance $\mathbb{E}\big[\|\mathbf{g}_t - \nabla F(\vecW_t)\|_2^2 \mid \vecW_t\big] \leq M$. With a constant step size $\alpha$ satisfying $0 < \alpha \leq \frac{1}{L}$, we have:
\begin{displaymath}
    \mathbb{E} R_t \leq \left(1 - \alpha c\right)^{t - 1} R_1 + \frac{\alpha L M}{2c}.
\end{displaymath}
Consequently, the asymptotic expected suboptimality is bounded by
\begin{displaymath}
    \limsup_{t \to \infty} \mathbb{E} R_t \leq \frac{\alpha L M}{2c}.
\end{displaymath}

For diminishing step sizes $\alpha_t = \frac{\beta}{\gamma + t}$ with $\beta > \frac{1}{c}$, the variance directly affects the convergence rate constant:
\begin{displaymath}
    \mathbb{E} R_t \leq \frac{\nu}{\gamma + t}, \,\,\,
    \nu = \max\left\{\frac{\beta^2 L M}{2 (\beta c - 1)}, (\gamma + 1) R_1 \right\}.
\end{displaymath}
\end{theorem}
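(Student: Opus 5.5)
The plan is the standard one-step descent argument from \cite{bottou2018optimization}, followed by unrolling a linear recursion. By $L$-smoothness of $F$, for the update $\vecW_{t+1} = \vecW_t - \alpha_t \mathbf{g}_t$ we have
\[
F(\vecW_{t+1}) \le F(\vecW_t) - \alpha_t \nabla F(\vecW_t)^\top \mathbf{g}_t + \tfrac{L\alpha_t^2}{2}\|\mathbf{g}_t\|_2^2 .
\]
We take conditional expectation given $\vecW_t$. Unbiasedness turns the linear term into $-\alpha_t\|\nabla F(\vecW_t)\|_2^2$. The bias–variance split gives $\mathbb{E}[\|\mathbf{g}_t\|^2\mid\vecW_t] = \|\nabla F(\vecW_t)\|^2 + \mathbb{E}[\|\mathbf{g}_t-\nabla F(\vecW_t)\|^2\mid \vecW_t] \le \|\nabla F(\vecW_t)\|^2 + M$. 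Hence
\[
\mathbb{E}[F(\vecW_{t+1})\mid\vecW_t] \le F(\vecW_t) - \alpha_t\bigl(1-\tfrac{L\alpha_t}{2}\bigr)\|\nabla F(\vecW_t)\|^2 + \tfrac{L\alpha_t^2 M}{2}.
\]
When $\alpha_t\le 1/L$, the factor $1-L\alpha_t/2$ is at least $1/2$, so the descent term is at most $-\frac{\alpha_t}{2}\|\nabla F(\vecW_t)\|^2$.

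Next we use strong convexity through the Polyak–Łojasiewicz inequality $2c\,(F(\vecW)-F^*) \le \|\nabla F(\vecW)\|^2$. This follows by minimizing both sides of the strong-convexity lower bound $F(\mathbf{v}) \ge F(\vecW) + \nabla F(\vecW)^\top(\mathbf{v}-\vecW) + \frac{c}{2}\|\mathbf{v}-\vecW\|^2$ over $\mathbf{v}$. Subtracting $F^*$ and taking total expectation gives the key recursion
\[
\mathbb{E}R_{t+1} \le (1-\alpha_t c)\,\mathbb{E}R_t + \tfrac{L\alpha_t^2 M}{2}.
\]
Note that $\alpha c \le c/L \le 1$, so $1-\alpha c\in[0,1)$.

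For the constant step size we subtract the fixed point $\frac{\alpha L M}{2c}$ from both sides. This shows that $\mathbb{E}R_{t}-\frac{\alpha LM}{2c}$ contracts by the factor $(1-\alpha c)$ at each step. Unrolling from $t=1$ gives
\[
\mathbb{E}R_t \le \frac{\alpha LM}{2c} + (1-\alpha c)^{t-1}\Bigl(R_1 - \frac{\alpha LM}{2c}\Bigr) \le (1-\alpha c)^{t-1}R_1 + \frac{\alpha LM}{2c},
\]
and the limsup statement follows because $(1-\alpha c)^{t-1}\to 0$.

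For diminishing steps we prove $\mathbb{E}R_t \le \nu/(\gamma+t)$ by induction; the base case $t=1$ holds by the choice $\nu\ge(\gamma+1)R_1$. Write $\hat t=\gamma+t$. Plugging the hypothesis into the recursion gives
\[
\mathbb{E}R_{t+1} \le \Bigl(1-\frac{\beta c}{\hat t}\Bigr)\frac{\nu}{\hat t} + \frac{\beta^2 LM}{2\hat t^2} = \frac{\hat t-1}{\hat t^2}\nu - \frac{(\beta c-1)\nu - \beta^2LM/2}{\hat t^2}.
\]
The subtracted term is nonnegative exactly when $\nu \ge \frac{\beta^2LM}{2(\beta c-1)}$, and this is where $\beta>1/c$ is used. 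The remaining term satisfies $\frac{\hat t-1}{\hat t^2}\le\frac{1}{\hat t+1}$, which closes the induction.

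The main obstacle is this last step. The algebra must be arranged so the residual term has a definite sign, and the side condition $\alpha_t\le 1/L$ must hold, i.e.\ $\beta/(\gamma+1)\le 1/L$. That condition is not stated explicitly in the theorem, so it has to be taken as an implicit requirement on $\gamma$, as in \cite{bottou2018optimization}.
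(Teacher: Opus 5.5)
Your proposal is correct and is essentially the argument of the cited source: the smoothness descent step with the bias--variance split, the Polyak--\L{}ojasiewicz consequence of strong convexity, then contraction for constant steps and induction via $\hat t^2\ge(\hat t-1)(\hat t+1)$ for diminishing steps; the paper itself gives no proof and simply defers to Bottou et al. You are also right that the diminishing-step claim needs the omitted hypothesis $\beta/(\gamma+1)\le 1/L$ (with $\gamma>0$), which is also what guarantees $1-\alpha_t c\ge 0$ when you substitute the induction hypothesis; without it the statement is false, e.g.\ exact gradients on $F(w)=\frac{L}{2}w^2$ with $\beta=3/L$ and $\gamma$ near $0$ give $R_2\approx 4R_1>\nu/(\gamma+2)$.
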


Thus, the target regret at iteration $t$ of SGD has the form $O\left(\frac{M}{t} \right)$ and linearly depends on the upper bound for the noise variance $M$.
Reducing $M$ improves the convergence ratio for stochastic gradient descent.
Informally, we say that.


\subsubsection{SGD with momentum in non-smooth case}

For stochastic momentum methods, nonconvex convergence analyses typically bound stationarity rather than regret. 
Under smoothness and bounded-variance assumptions, Yan et al.~\cite{yan2018unified} prove that stochastic momentum methods satisfy
\[
    \min_{0\leq k\leq t}
    \mathbb{E}\|\nabla F(\vecW_k)\|^2
    =
    O\left(
        \frac{R_1 / C + C (G^2 + M)}
        {\sqrt{t}}
    \right),
\]
where $G$ bounds the gradient norm.
Again, for fixed optimization constants, the noise-dependent part of the stationarity bound is constrained from above by $M /\sqrt{t}$.
There we also see a similar dependence of $M$, and an important component for the efficiency of the optimization remains the minimization of the stochastic gradient variance.

As SGD with momentum is a special case of Adam~\cite{kingma2015adam}, those upper bounds also characterize the behaviour of it.
For additional results, see~\cite{wang2024convergence} and~\cite{carmon2020lower} with the overall observations of the linear dependence on the stochastic gradient variance remaining true.

\subsubsection{Variance Reduction}

The above results imply that accelerating stochastic optimization requires unbiased gradient estimates with minimal variance $M$. 
So, the main question for us is how the choice of hyperparameters affects the stochastic gradient's variance and how to minimize it.

\subsection{Formal Problem Setup}

While a neural network typically consists of many layers, we would focus on the last layer for several reasons and reduce the problem to the estimation of parameters for multiclass logistic regression.
Firstly, due to the backpropagation structure, the reduction in the variance of the gradient across the weights of the last layer ensures that the variance of the entire gradient is reduced.
Secondly, most of the parameters in a typical large transformer model lie in the last layer, for example, see LLMs~\cite{wijmans2025cut} or modern sequential recommendation models~\cite{zhelnin2026faster}.
Finally, adding more layers would clutter the analytical derivations and harm the clarity of the results.

We consider a sample of pairs $\mathcal{D} = \{(\vecX_j, \vecY_j)\}_{j = 1}^N$.  
Let $\vecX \in \mathbb{R}^d$ be the input feature vector to the last layer, and $\mathbf{y} \in \{0, 1\}^K$ a one‑hot vector of the true class among $K$ possible ones.

Assume that $\mathbf{y}_{\beta} \sim \mathrm{Multinomial}(\boldsymbol{\pi}_{\beta})$ with $\boldsymbol{\pi}_{\beta} = \mathrm{softmax}(\mathbf{W}^* \mathbf{x}_{\beta})$, where $\mathbf{W}^*$ is the unknown weight matrix.

The last layer is a multi‑class logistic regression with parameters $\{W_{ij}\}$ followed by a softmax. Its output is a vector $\mathbf{p}$ on the $K$-dimensional probability simplex $\Delta^K$. For a training batch, we use the cross‑entropy loss:
\begin{equation}
\mathcal{L} = - \frac{1}{n} \sum_{j \in \batchSet} \mathbf{y}_j^{T} \log \mathbf{p}_j,
\end{equation}
where $n$ is the batch size, $\log \mathbf{p}$ is the element‑wise logarithm of the predicted probabilities, and $\batchSet$ is the set of indexes for the current batch.

For $\mathbf{l} = \mathbf{W}\mathbf{x}$ being the logits and $\mathbf{h}$ their exponentials: $h_i = e^{l_i}$,
\begin{equation}
\label{eq:softmax_via_h}
\mathbf{p} = \mathrm{softmax}(\mathbf{l}) = \frac{1}{\sum_{i=1}^{K} h_i} \mathbf{h}.
\end{equation}

The total number of classes is $K$. At each training step, we randomly select a subset $\mathbb{S}$ of classes with $|\mathbb{S}| = k < K$. The true class of every example in the batch is forced to belong to $\mathbb{S}$. Computing the softmax only over $\mathbb{S}$ yields a biased vector $\mathbf{p}' \in \Delta^K$ that has zeros for the omitted classes.

Given a memory budget $B$, we must choose $n$ and $k$ such that
\begin{equation}
B = n \cdot k = \mathrm{const}.
\end{equation}

Note that in practical scenarios $k \ge \sqrt{B}$. 
$\mathbb{S}$ must contain the true class of each of the $n$ examples, we have $k \ge n$ (assuming all true classes are distinct). 
Together with $B = nk$, this implies $k \ge \sqrt{B}$.

For convenience, denote by $\mathrm{pos}_j$ the target class of example $j$, and by $\mathbb{S}_j^{\mathrm{neg}} = \mathbb{S} \setminus \{\mathrm{pos}_j\}$ the set of selected negative classes for that~example.

\subsection{Unbiased Gradient Estimate}

To apply standard SGD convergence results, we require the gradient estimator
to be unbiased. 
The following theorem provides a correction that ensures
unbiasedness under class sampling.

Let the expectation of the exponentiated logit $h_{\mathrm{neg}, j} = \exp \left(l_{\mathrm{neg}, j} \right)$ for the negative class~be
\begin{equation}
    h_{\mathrm{neg}, j} = \mathbb{E}\left[h_i \mid i \in \mathbb{S}_{j}^{\mathrm{neg}}\right],
\end{equation}
and set $\alpha_{j} = \frac{h_{\mathrm{pos}, j}}{h_{\mathrm{neg},j}}$, where $h_{\mathrm{pos},j}$ is the exponent of the logit of the target class.


\begin{theorem}
\label{th:unbias}
Denote $\mathbf{p_j''} = \mathrm{diag}(\mathbf{c_j}) \mathbf{p_j'}$, where $\mathbf{c_j}$ for the true class $c_{\mathrm{pos}, j} = \frac{\alpha_{j} + k}{\alpha_{j} + K}$, and for a negative class $
    c_{\mathrm{neg}, j} = \frac{K}{k} \cdot \frac{\alpha_{j} + k}{\alpha_{j} + K}$.    

Then, $\mathbb{E}\mathbf{p_{j}''} = \mathbb{E}\mathbf{p_{j}}$ and using $\mathbf{p_{j}''}$ instead of $ \mathbf{p_{j}'}$ give us unbiased gradient estimation.
\end{theorem}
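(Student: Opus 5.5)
The plan is to work one example $j$ at a time and to use the linear dependence of the cross-entropy gradient on $\mathbf{p}_j$. The gradient of $-\mathbf{y}_j^\top\log\mathbf{p}_j$ with respect to $\matW$ is $(\mathbf{p}_j-\mathbf{y}_j)\vecX_j^\top$, which is affine in $\mathbf{p}_j$. Replacing $\mathbf{p}_j$ by a random surrogate whose expectation equals $\mathbf{p}_j$ therefore yields an unbiased gradient. The batch average then remains unbiased by linearity, and this holds whether the batch and class sampling are independent or the expectation is taken conditionally on the batch. The whole theorem thus reduces to checking $\mathbb{E}\,\mathbf{p}_j''=\mathbf{p}_j$ coordinatewise over the random choice of $\mathbb{S}$.

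I will split the coordinates into the positive class and the negative classes.

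\textbf{Positive class.} We have $p'_{\mathrm{pos},j}=h_{\mathrm{pos},j}/\bigl(h_{\mathrm{pos},j}+\sum_{i\in\mathbb{S}^{\mathrm{neg}}_j}h_i\bigr)$. I will use the same mean-field approximation that underlies the definition of $h_{\mathrm{neg},j}$: each exponentiated logit in the sampled denominator, and likewise in the full denominator, is replaced by its expected value $h_{\mathrm{neg},j}$. Counting the positive as one of the $k$ (respectively $K$) terms, the sampled denominator becomes $h_{\mathrm{neg},j}(\alpha_j+k)$ and the full one becomes $h_{\mathrm{neg},j}(\alpha_j+K)$. Hence $p'_{\mathrm{pos},j}\approx \alpha_j/(\alpha_j+k)$ and $p_{\mathrm{pos},j}\approx\alpha_j/(\alpha_j+K)$. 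Their ratio is exactly $c_{\mathrm{pos},j}=(\alpha_j+k)/(\alpha_j+K)$.

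\textbf{Negative classes.} A fixed negative class $i$ lands in $\mathbb{S}$ with probability of order $k/K$ under uniform sampling; I will state the precise inclusion probability and absorb it into this factor. On that event, $p'_{i,j}=h_i/\bigl(h_{\mathrm{neg},j}(\alpha_j+k)\bigr)$ under the same approximation, and it is zero otherwise. Multiplying by the inclusion probability gives $\mathbb{E}\,p'_{i,j}\approx \frac{k}{K}\,h_i/\bigl(h_{\mathrm{neg},j}(\alpha_j+k)\bigr)$. Meanwhile $p_{i,j}=h_i/\bigl(h_{\mathrm{neg},j}(\alpha_j+K)\bigr)$. The correcting factor is therefore $\frac{K}{k}\cdot\frac{\alpha_j+k}{\alpha_j+K}=c_{\mathrm{neg},j}$, as claimed. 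As a sanity check, I will verify that $\mathbf{p}''_j$ sums to one in expectation, which follows from the two coordinate computations.

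The main obstacle is that $\mathbb{E}[1/(\text{sum})]\neq 1/\mathbb{E}[\text{sum}]$, so exact equality cannot hold in general. The identity is exact only under the working assumption that the denominator is replaced by its conditional mean, that is, a first-order (delta-method) or concentration argument for the sum of $k-1$ sampled exponentials. I would first try to present the statement as exact under this mean-field interpretation, which is the one implicit in how $\alpha_j$ and $h_{\mathrm{neg},j}$ are defined. I would then remark that the residual bias is $O(\mathrm{Var}(h)/(k\,h_{\mathrm{neg},j}^2))$ by a second-order Taylor expansion of $x\mapsto 1/x$ around the mean, so it vanishes as $k$ grows. The step that turns unbiasedness of $\mathbf{p}''_j$ into unbiasedness of the gradient is immediate from the linearity of $(\mathbf{p}-\mathbf{y})\vecX^\top$ in $\mathbf{p}$, with the scaling factors $c_j$ treated as fixed (non-differentiated) weights.
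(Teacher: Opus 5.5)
Your proposal follows essentially the same route as the paper. The paper also replaces each softmax denominator by its expectation ($h_{\mathrm{pos}}+(k-1)h_{\mathrm{neg}}$ versus $h_{\mathrm{pos}}+(K-1)h_{\mathrm{neg}}$), weights negative coordinates by the inclusion probability $k/K$, takes the ratio, and uses linearity of $(\mathbf{p}-\mathbf{y})\mathbf{x}^\top$ in $\mathbf{p}$. So your mean-field reading, and your caveat that equality is only first-order because $\mathbb{E}[1/S]\neq 1/\mathbb{E}[S]$, match the actual rigor of the paper's argument. One small fix: counting the positive as one of the $k$ terms gives $h_{\mathrm{neg},j}(\alpha_j+k-1)$ rather than $h_{\mathrm{neg},j}(\alpha_j+k)$, so, like the paper, you must explicitly neglect the $-1$ terms (including in the exact inclusion probability $(k-1)/(K-1)$), using $\alpha_j+k\gg 1$, to land on the stated constants.
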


The proof is provided in Appendix.
In practice, $\alpha_{j}$ is unknown, but we estimate it using the current logits
with $\hat{h}_{\mathrm{neg},j} = \frac{1}{|\mathbb{S}_{j}^{\mathrm{neg}}|} \sum_{i \in \mathbb{S}_{j}^{\mathrm{neg}}} h_{j i}$,
$\hat{\alpha}_{j} = \frac{h_{\mathrm{pos}, j}}{\hat{h}_{\mathrm{neg}, j}}$.

\subsection{Minimizing Variance of the Gradient Estimate}

There are two goals in this subsection: to obtain the variance of the corrected gradient and minimize it to obtain the maximum convergence speed.

When analyzing the gradient variance, we rely on approximate methods such as the Felton–Wilkinson approximation and the delta method. Their applicability is justified under the following technical assumptions:
\begin{itemize}
    \item[A1.] The logits $l_i$ are normally distributed with $\mathbb{E} l_i = 0$.
    \item[A2.] The variance of $l_i$, $\sigma_l^2 < C$ with $C = 1.2$.
    \item[A3.] $\frac{h_{\mathrm{pos}}}{K} \ll h_{\mathrm{neg}}$.
\end{itemize}

An extended overview of A1 for modern deep neural networks is available in~\cite{wolinski2025gaussian}, with Gaussianity proved in certain important cases.
For A2, we can control the variance by appropriately normalizing the previous layers, e.g., with LayerNorm~\cite{ba2016layer}. 
A3 holds due to inherent label noise in recommendation systems; this assumption also becomes non-restrictive for large values of $K$.

Again, our goal is to provide variance for the corrected sampled-softmax gradient of the last-layer weight matrix:
\[
    \mathbf{g}_{n,k}
    =
    -\frac{1}{n}
    \sum_{j\in\mathcal{B}}
    (\mathbf{p}''_j-\mathbf{y}_j)\mathbf{x}_j^\top.
\]
Here, we consider independent sampling in the mini-batch.
The following statement holds.

\begin{tcolorbox}[bluebox]

\begin{theorem}[stochastic gradient variance, informal]
\label{th:full_gradient_variance}
Assume that A1-A3 hold. 
Then the full last-layer stochastic-gradient variance satisfies
\[
    \operatorname{Var}(\mathbf{g}_{n, k})
    \leq
    \frac{\yCoef}{n}
    +
    \frac{\pCoef}{k} + \mathrm{const}
\]
for some positive constants $\yCoef \sim O(\frac{1}{B K})$ and $\pCoef \sim O(\frac{1}{K})$. 
\end{theorem}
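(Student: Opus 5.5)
We will decompose the variance with the law of total variance over the two independent sources of randomness: the mini-batch $\mathcal{B}$ of $n$ i.i.d.\ examples, and the class subset $\mathbb{S}$ of size $k$.
Write $\mathbf{g}_{n,k} = -\frac{1}{n}\sum_{j\in\mathcal{B}} (\mathbf{p}''_j - \mathbf{y}_j)\mathbf{x}_j^\top$. Condition on the batch and split
\[
\operatorname{Var}(\mathbf{g}_{n,k}) = \mathbb{E}_{\mathcal{B}}\bigl[\operatorname{Var}_{\mathbb{S}}(\mathbf{g}_{n,k}\mid\mathcal{B})\bigr] + \operatorname{Var}_{\mathcal{B}}\bigl(\mathbb{E}_{\mathbb{S}}[\mathbf{g}_{n,k}\mid\mathcal{B}]\bigr).
\]
By Theorem~\ref{th:unbias}, $\mathbb{E}_{\mathbb{S}}\mathbf{p}''_j = \mathbf{p}_j$. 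The second term is therefore the ordinary full-softmax mini-batch variance. Since the examples are drawn independently, it equals $\frac{1}{n}\operatorname{Var}\bigl((\mathbf{p}-\mathbf{y})\mathbf{x}^\top\bigr)$. Under the model $\mathbf{y}\sim\mathrm{Multinomial}(\boldsymbol\pi)$, this per-example variance splits into two parts:
\begin{itemize}
\item a label-noise part, $\mathbb{E}\|\mathbf{x}\|^2\,\mathbb{E}\|\mathbf{y}-\boldsymbol\pi\|^2$, which does not depend on $n$ or $k$ and is absorbed into $\mathrm{const}$;
\item a fluctuation part of $\mathbf{p}-\mathbf{y}$, whose $n$-dependent coefficient gives $\yCoef/n$.
\end{itemize}

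The first term is the class-sampling contribution. We handle it entrywise. For a fixed example, the corrected probability of a negative class $i$ is $c_{\mathrm{neg}}\,h_i/(h_{\mathrm{pos}}+\sum_{s\in\mathbb{S}^{\mathrm{neg}}}h_s)$, and the positive class gets the analogous expression with $c_{\mathrm{pos}}$. Under A1 the $h_i$ are lognormal. We approximate the sum of the $k-1$ lognormals in the denominator by a single lognormal (Fenton--Wilkinson), matching its mean $(k-1)e^{\sigma_l^2/2}$ and its variance $(k-1)(e^{\sigma_l^2}-1)e^{\sigma_l^2}$. We then apply the delta method to the ratio. A2 ($\sigma_l^2<1.2$) keeps the lognormal moments bounded and the approximation accurate. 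A3 lets us drop $h_{\mathrm{pos}}$ relative to the sum $K h_{\mathrm{neg}}$, so that $c\approx K/k$ on negatives.

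For a negative class this gives $\operatorname{Var}(p''_i)\approx \frac{K^2}{k^2}\cdot\frac{k}{K}\cdot\frac{\operatorname{Var}(h)+\ldots}{(kh_{\mathrm{neg}})^2}$, combining the inclusion probability $k/K$ with the ratio variance. Summing over the $K$ classes and multiplying by $\|\mathbf{x}\|^2$ gives a per-example contribution of order $1/k$ times a quantity of order $1/K$ once the per-class probabilities (of order $1/K$) are accounted for. Averaging over $n$ examples with a shared $\mathbb{S}$ leaves this term at order $1/k$ and does not add a $1/n$ factor, which gives $\pCoef/k$ with $\pCoef=O(1/K)$.

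For $\yCoef$, the $1/n$ term carries the fluctuation of the corrected estimator that is not removed by the shared class sample; using $k=B/n$ gives the stated $O(1/(BK))$ scaling.

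\textbf{Main obstacle.} The delicate step is the delta-method control of the ratio. The denominator is random and correlated with the numerator through $\mathbb{S}$, and the correction $c_j$ itself depends on $\alpha_j$. We first treat $\alpha_j$ as fixed to get a clean bound, then argue that the fluctuation of $\hat\alpha_j$ contributes only higher-order $O(1/k^2)$ terms, which are dominated by $\pCoef/k$.

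The remaining cross terms of order $1/(nk)$ are bounded by the larger of the two main terms via AM--GM and absorbed into the constants.
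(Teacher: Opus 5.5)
Your argument breaks at the $1/n$ term. In your total-variance split the label-noise part enters as $\frac{1}{n}\,\mathbb{E}\|\vecX\|^2\,\mathbb{E}\|\vecY-\boldsymbol{\pi}\|^2$. Only the per-example factor is free of $n$, so this term cannot be moved into $\mathrm{const}$. Entrywise and to leading order it is exactly the paper's second lemma, $\frac{1}{n}\bigl(\pi_{i_0}\mathbb{D}x_r+(\mathbb{E}x_r)^2\pi_{i_0}(1-\pi_{i_0})\bigr)\approx\frac{\mathbb{E}x_r^2}{Kn}$. So the $1/n$ coefficient is of order $1/K$ and contains no $B$. Your remaining ``fluctuation part'' cannot supply an $O(1/(BK))$ coefficient either. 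By Theorem~\ref{th:unbias}, $\mathbb{E}_{\mathbb{S}}[\mathbf{g}_{n,k}\mid\batchSet]$ is the full-softmax mini-batch gradient, which involves no $k$, so substituting $k=B/n$ changes nothing.

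In the paper the factor $1/B$ comes from the other term. The first lemma divides the per-example class-sampling variance by $n=B/k$, treating the $n$ examples as independent. It then expands in $1/k$ to get $\mathrm{const}+\frac{\pCoef}{BKk}$ with $\pCoef=3(1-e^{-\sigma_l^2})$. The appendix therefore concludes $\operatorname{Var}(\mathbf{g}_{n,k})\le\frac{\yCoef}{Kn}+\frac{\pCoef}{BKk}$ with $\yCoef=\mathbb{E}x_r^2$. There the $1/(BK)$ scaling sits on the $1/k$ term and the $1/K$ scaling on the $1/n$ term, which is the reverse of how the informal statement labels $\yCoef$ and $\pCoef$. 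It is this appendix form that Theorem~\ref{th:optimal_n_k} minimizes. With the scalings you assert, minimizing under $nk=B$ would give $n^\star\asymp 1$, $k^\star\asymp B$, the opposite rule.

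Your observation that a shared $\mathbb{S}$ prevents averaging over the batch is a genuine departure from the paper. It is defensible, since the indicator $\mathrm{Be}(k/K)$ is common to all $n$ examples. But it generically leaves a class-sampling term of order $1/(Kk)$ and removes the only source of the $1/B$ factor. So your route cannot deliver $\yCoef=O(1/(BK))$ at all. It would instead give two $O(1/K)$ coefficients, which under $nk=B$ points to a balanced $n\asymp k\asymp\sqrt{B}$.

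There are two further problems:
\begin{enumerate}
\item The correction on negatives is not $K/k$. It is $c_{\mathrm{neg}}=\frac{K}{k}\cdot\frac{\alpha+k}{\alpha+K}\approx 1+\alpha/k$ under A3. Your choice therefore breaks the unbiasedness you invoke and inflates the displayed per-class variance by roughly $(K/k)^2$. The appeal to ``per-class probabilities of order $1/K$'' does not fix this.
\item The obstacle you single out does not arise in the paper's route. With $\hat\alpha$ plugged in, the sampled denominator cancels and $\hat c_{\mathrm{neg}}p'_{i_0}=\frac{h_{i_0}}{\frac{k}{K}h_{\mathrm{pos}}+\sum_{i\in\mathbb{S}^{\mathrm{neg}}}h_i}\,\mathrm{Be}(k/K)$. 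After that, only the Fenton--Wilkinson lognormal moments of $1/\hat S$ and the product-variance formula are needed.
\end{enumerate}
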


\end{tcolorbox}

Exact values for $\yCoef$ and $\pCoef$ as well as the proof are given in the Appendix.

Given this result, the following theorem answers our main question on the value of the variance given the pair $(n, k)$.

\begin{tcolorbox}[bluebox]

\begin{theorem}
\label{th:optimal_n_k}
    Assume A1-A3 hold. 
    Then the minimum gradient variance for SGD is attained for the pair $(n*, k*)$ such that 
    \[
        n* \asymp B, \qquad k* \asymp 1.
    \]
    Under a constraint $k \geq \sqrt{B}$:
    \[
        n* \asymp  k* \asymp \sqrt{B}.
    \]
\end{theorem}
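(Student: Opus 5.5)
The plan is to reduce the statement to a one-dimensional constrained minimization of the variance bound from Theorem~\ref{th:full_gradient_variance}. That theorem gives
\[
V(n,k) := \frac{\yCoef}{n} + \frac{\pCoef}{k} + \mathrm{const},
\]
with $\yCoef \sim O(\frac{1}{BK})$ and $\pCoef \sim O(\frac{1}{K})$. By the SGD result of \cite{bottou2018optimization}, and the analogous momentum/Adam bounds, the convergence constant is linear in $M$. It therefore suffices to minimize $V$ over the feasible set $\{(n,k): nk = B,\ 1\le k\le K,\ 1 \le n\}$.

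Substituting $n = B/k$ gives
\[
\phi(k) = \frac{\yCoef k}{B} + \frac{\pCoef}{k},
\]
which is convex in $k>0$. Its stationary point is $k_0 = \sqrt{B\pCoef/\yCoef}$, where $\phi(k_0) = 2\sqrt{\yCoef\pCoef/B}$. The first step is to track the scalings honestly. With $\yCoef = \Theta(\frac{1}{BK})$ and $\pCoef = \Theta(\frac{1}{K})$, we get $\pCoef/\yCoef = \Theta(B)$, so $k_0 = \Theta(B)$.

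Read literally, this points toward large $k$. That conflicts with the claimed $n^* \asymp B$, and it is the main obstacle. I expect the resolution to lie in how the $1/B$ in $\yCoef$ arises in the Appendix. Most likely $\yCoef$ contains the factor $1/k$ through $\mathbf{p}''$ being supported on $k$ classes, and that factor was rewritten as $n/B$. If so, the $y$-term is really $\Theta(\frac{1}{nK})$ or smaller and does not grow with $k$, while the class-sampling term $\pCoef/k$ carries the $K/k$ importance weight of $c_{\mathrm{neg}}$ from Theorem~\ref{th:unbias}. I would go back to the explicit expressions and write both terms in $(n,k)$ form before substituting.

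The comparison I expect to obtain is the following. The class-sampling contribution, after averaging over $n$ independent examples, scales like $\frac{\pCoef}{nk} = \frac{\pCoef}{B}$. This is independent of the split under the budget. The mini-batch contribution scales like $\frac{\yCoef'}{n}$. So $V$ is monotonically decreasing in $n$, and the boundary optimum is $n^* \asymp B$, $k^* \asymp 1$. To control the constants under A1--A3, I will bound $\mathbb{E}h_i$ and $\mathbb{E}h_i^2$ via lognormal moments, using $e^{\sigma_l^2/2}$ and $e^{2\sigma_l^2}$. A2 makes these $O(1)$, and the Fenton--Wilkinson approximation handles the normalizing sums. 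A3 lets me replace $\alpha_j + K$ by $K$ and $\alpha_j + k$ by $k$ up to lower-order terms, so that $c_{\mathrm{neg}} \approx 1$ and $c_{\mathrm{pos}} \approx k/K$.

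For the constrained case, I add $k \ge \sqrt{B}$ (equivalently $n \le \sqrt{B}$, from $k \ge n$). Since $V$ is decreasing in $n$ on the feasible interval $[1,\sqrt{B}]$, the minimum is attained at the boundary $n^* = \sqrt{B}$, hence $k^* = B/n^* = \sqrt{B}$. This is a one-line monotonicity argument once the unconstrained analysis establishes the sign of $\partial V/\partial n$.

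The hard step is therefore the second one. I need to confirm, from the explicit Appendix formulas, that after rewriting in terms of $(n,k)$ the $k$-dependence of the class-sampling term is dominated by, or absorbed into, the budget. I will try first to show that this term equals $\Theta(1/(nk))$ up to $K$-dependent constants.
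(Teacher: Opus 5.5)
Your first computation is already the paper's proof. What derails you is trusting the scalings in the informal statement, which are swapped relative to what the appendix proves. The bound actually derived there is
\[
\operatorname{Var}(\mathbf{g}_{n,k}) \le \frac{\yCoef}{Kn}+\frac{\pCoef}{BKk}, \qquad \yCoef=\mathbb{E}x_r^2,\quad \pCoef=3\bigl(1-e^{-\sigma_l^2}\bigr),
\]
and both constants are independent of $B$. So the coefficient of $1/n$ is $O(1/K)$ and the coefficient of $1/k$ is $O(1/(BK))$.

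The $1/B$ is not hidden in $\yCoef$, as you guess. It enters the class-sampling term in Lemma~1, where the batch-averaging factor $1/n$ is rewritten as $k/B$. With these coefficients, your own stationary point $k_0=\sqrt{B\cdot(\text{coeff.\ of }1/k)/(\text{coeff.\ of }1/n)}$ gives $k^\star=\sqrt{\pCoef/\yCoef}\asymp 1$ and $n^\star=B\sqrt{\yCoef/\pCoef}\asymp B$. This is exactly how the paper concludes: it minimizes the convex function $\phi(n)=\frac{\yCoef}{Kn}+\frac{\pCoef n}{B^2K}$.

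The structure you propose instead would not carry the proof. Lemma~1 gives the class-sampling variance as $\frac{e^{-\sigma_l^2}}{BK}\bigl(1+\frac{3(e^{\sigma_l^2}-1)}{k}+\dots\bigr)$. You keep only the split-invariant leading term and discard $\frac{\pCoef}{BKk}=\frac{\pCoef n}{B^2K}$. That term grows with $n$, and at $n\asymp B$ it is of the same order as the mini-batch term $\frac{\yCoef}{Kn}$. Hence $V$ is not monotone on $[1,B]$ in general: it increases on $(n^\star,B]$ whenever $\yCoef<\pCoef$. The optimum is an interior stationary point, not the boundary $k^\star=1$.

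Even the bound you aim for does not support monotonicity. A class-sampling term of order $\Theta(1/(nk))$, with unequal upper and lower constants, is not split-invariant. What does follow from it is a comparison argument: $V(n)\ge\frac{\yCoef}{\epsilon BK}$ for $n\le\epsilon B$, while $V(B)=O(\frac{1}{BK})$. This also gives $n^\star\asymp B$.

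For the constrained case, your boundary argument is correct once you justify that $\phi$ decreases on $[1,\sqrt B]$. That follows from convexity and $n^\star\ge\sqrt B$ for $B\ge\pCoef/\yCoef$, a step the paper also leaves implicit.

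Lastly, A3 gives $\alpha_j\ll K$, not $\alpha_j\ll k$, so the approximation $c_{\mathrm{neg}}\approx 1$ is not justified. It is also unnecessary, since exactly $\hat c_{\mathrm{neg}}p'_{i_0}=\frac{K}{k}\frac{h_{i_0}}{h_{\mathrm{pos}}+K\hat h_{\mathrm{neg}}}\mathrm{Be}(k/K)$.
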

\end{tcolorbox}

\begin{consequence}
    To ensure maximum convergence speed one should minimize variance and, thus, select maximum value for the batch size $n$ and minimize the number of negative classes $k$ under constraints for $k$.
\end{consequence}

We validate the utility of the introduced $n* =  k* = \sqrt{B}$ rule in further experiments.
Further improvements of the rule are possible, if we take into the account the dependence of the optimal values on $B$.

\section{Experiments}

In this section, we investigate the interaction between batch size and the number of negative samples in training objectives based on negative sampling. The primary goal of the experiments is to analyze how different configurations affect optimization dynamics and convergence behavior.
We consider both synthetic and real data experiments to explore how our theoretical results should be applied in practical recommender systems.

\subsection{Evaluation metrics}

In experiments, we compare the final quality that manifests as NDCG@10 in the end of the training or at specific training iteration and the convergence speed that we measure with \emph{Area Under the Validation Loss Curve (AUL)}, defined in the next paragraph.

The AUL is formally defined as $
\text{AUL} = \sum_{t=1}^{T} \mathcal{L}_{\text{val}}(t) \cdot \Delta t
$,
where $\mathcal{L}_{\text{val}}(t)$ represents the validation loss at training step $t$, and $T$ denotes the total number of processed batches, $\Delta t$ is the time difference between iterations.
A lower AUL implies that the loss curve decays faster and maintains a lower baseline throughout training, indicating superior optimization efficiency.

\subsection{Synthetic Experiments}

We first conduct synthetic experiments. This setup allows us to analyze the optimization dynamics without the confounding effects of specific architectures or data.

\paragraph{Datasets and Model.} We generate a synthetic dataset consisting of $N = 100 000$ samples distributed across $K = 50 000$ distinct classes. 
Each sample $\vecH$ is represented by a continuous feature vector from a multivariate Gaussian. 
The target is generated under a multinomial logistic regression model.
Our model uses the same linear layer followed by softmax architecture. 
During optimization under fixed memory budget we uniformly at random perform objects and negatives sampling with the cross-entropy objective.

\paragraph{Hyperparameters and Configurations.} We ran a set of four different memory budget $B \in \{2^{4}, 2^{6}, 2^{10}, 2^{14}\}$. For a fixed value, we varied the $(n, k)$ pairs, from high-batch/low-sample regimes to low-batch/high-sample regimes.

\paragraph{Optimization Dynamics.} We visualize the validation loss trajectories over the course of training for the selected configurations in Figure~\ref{fig:synthetic_loss}. 

\begin{figure*}[h]
    \centering
    \includegraphics[width=1.7\columnwidth]{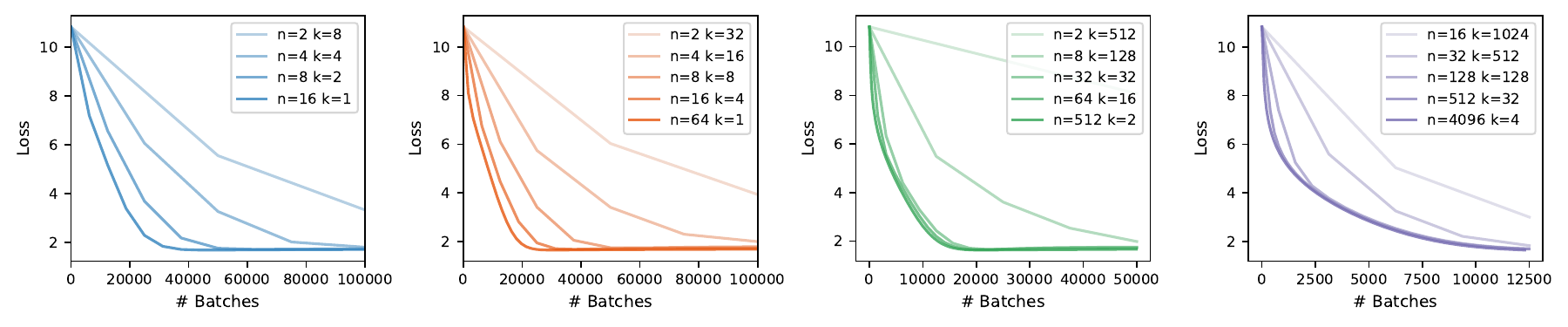}
    \caption{Validation loss dynamics over training iterations for various $(n, k)$ configurations. Each subplot from left to right represents a fixed memory budget $B \in \{2^{4}, 2^{6}, 2^{10}, 2^{14}\}$.}
    \label{fig:synthetic_loss}
\end{figure*}

The convergence  significantly accelerates as we increase the batch size $n$ at the expense of reducing the number of negative samples $k$. 
The configurations dominated by larger batch sizes achieve lower loss values much earlier in the optimization process, confirming our theoretical results.

\paragraph{Convergence Speed Quantification.}

Figure~\ref{fig:synthetic_nk} illustrates the relationship between the computed AUL and the structural ratio $\frac{n}{k}$ across all evaluated configurations.

\begin{figure}[h]
    \centering
    \includegraphics[width=0.8\columnwidth]{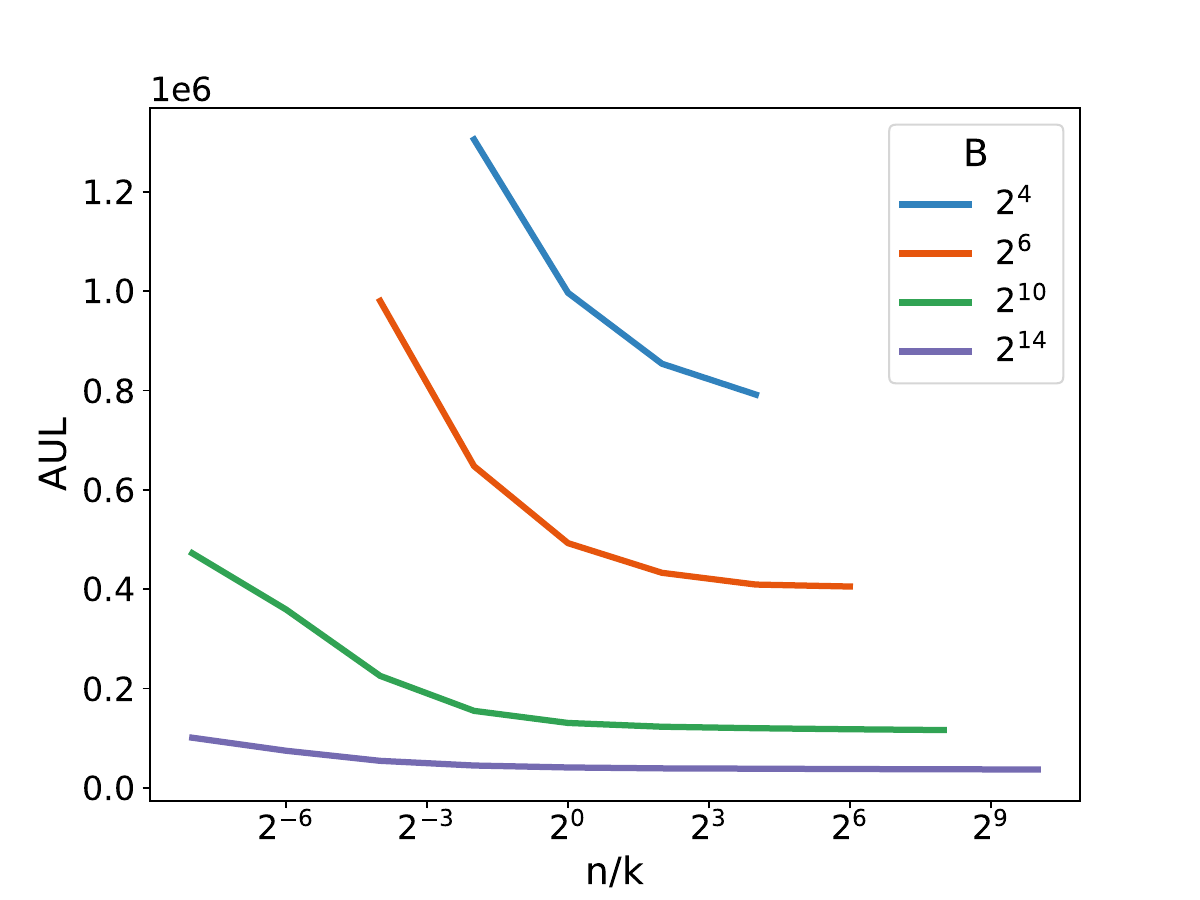}
    \caption{Area Under the Validation Loss Curve (AUL) as a function of the $\frac{n}{k}$ ratio for different memory budget $B$ configurations.}
    \label{fig:synthetic_nk}
\end{figure}

The empirical evidence in Figure~\ref{fig:synthetic_nk} shows that the AUL is a strictly decreasing function of the $\frac{n}{k}$ ratio. This behavior confirms that given a rigid hardware memory budget $B$, it is consistently more advantageous to maximize the batch size $n$ while scaling down the number of negative classes $k$. This structural configuration yields the most rapid objective minimization and optimal convergence behavior.

\subsection{Main Experiments}

The main experiments were conducted on sequential recommendation benchmarks using the Recommender Systems model family based on SASRec~\cite{kang2018self}. 
We evaluate the proposed training configurations on several widely used datasets, including MovieLens-1M, Gowalla, Netflix and MovieLens-20M.

We evaluate three different configurations of the pair batch size and number of classes $(n, k)$, specifically $(32, 512)$, $(64, 256)$, $(128, 128)$.


In experiments, we compare the standard cross-entropy objective (Vanilla) and a cross-entropy objective with additional logit correction terms introduced in Theorem ~\ref{th:unbias} (Unbiased).

We repeat experiments five times with different random seeds, and all reported metrics are averaged across these runs.
Considered optimizers are SGD and Adam.

For the datasets, the learning rate sequences from Table~\ref{tab:lr_values} are used.

\begin{table}[h]
\centering
\begin{tabular}{lll}
\toprule
Dataset & Optimizer & Learning Rates \\
\midrule
MovieLens-1M & SGD  & $1e\!-\!1,\; 3e\!-\!1,\; 1.0,\; 2.0$ \\
MovieLens-1M & Adam & $1e\!-\!3,\; 3e\!-\!3,\; 1e\!-\!2,\; 2e\!-\!2$ \\
\midrule
Gowalla      & SGD  & $1e\!-\!2,\; 3e\!-\!2,\; 1e\!-\!1$ \\
Gowalla      & Adam & $1e\!-\!4,\; 3e\!-\!4,\; 1e\!-\!3$ \\
\midrule
Netflix      & SGD  & $1e\!-\!2,\; 3e\!-\!2,\; 1e\!-\!1$ \\
Netflix      & Adam & $1e\!-\!4,\; 3e\!-\!4,\; 1e\!-\!3$ \\
\midrule
MovieLens-20M      & SGD  & $1e\!-\!2$ \\
MovieLens-20M      & Adam & $1e\!-\!4$ \\
\bottomrule
\end{tabular}
\caption{Learning rate grids evaluated for each dataset and optimizer. For MovieLens-20M, a single robust learning rate was utilized to balance the computational overhead of the multi-dataset evaluation suite.}
\label{tab:lr_values}
\end{table}

\subsection{Evaluation}

To analyze optimization behavior, we consider both final recommendation quality and training dynamics throughout optimization, as well as plots for optimization trajectories for different learning rates and $(n, k)$ configurations.

\subsection{Computational Efficiency and Hyperparameter Sensitivity}

We begin analysis by examining the practical trade-offs associated with varying the number of negative samples $k$. Figure~\ref{fig:training_time_vs_k} illustrates the average training time per epoch as a function of $k$. 
As expected, increasing the number of classes/negative samples $k$ results in a superlinear growth in computational overhead, permitting correct comparison for this pairs.
Thus, in further experiments we consider $k \leq n$ only.

\begin{figure}[t]
    \centering
    \includegraphics[width=0.8\columnwidth]{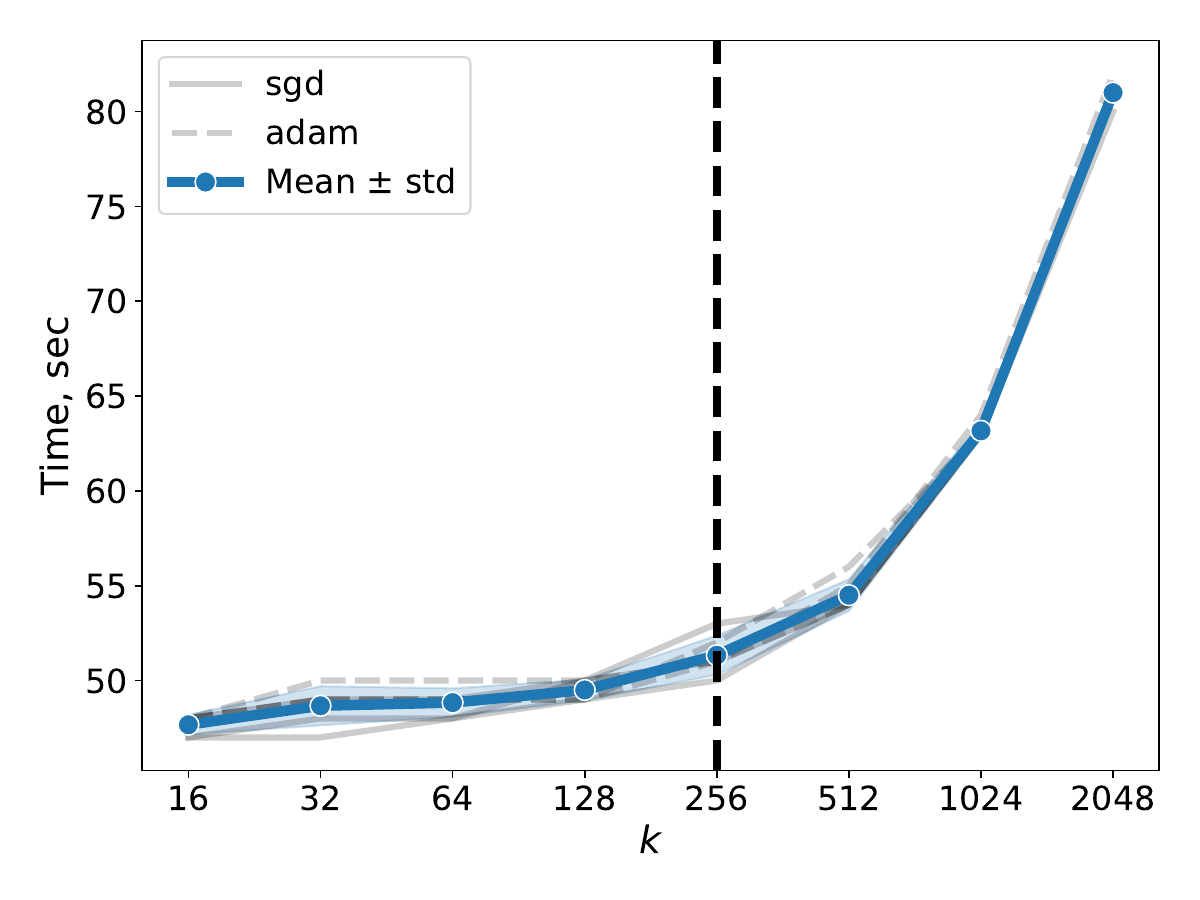}
    \caption{Average training time per epoch as a function of the number of negative classes $k$ for a fixed batch size $n = 256$. The vertical black line corresponds to $k = n$. Each curve represents a distinct seed run.}
    \label{fig:training_time_vs_k}
\end{figure}

We also varied the learning rate to select the best one. 
The convergence behavior shifts across $(n, k)$ pairs even within the same dataset, while in most cases the lowest one is the best.
As for optimizers, Adam requires a much tighter and lower learning rate grid than SGD to prevent divergence. 
In further experiments, we report results for optimal learning rates.





\subsection{Main Results and Convergence Analysis}

Table~\ref{tab:primary_result} summarizes the final test NDCG@10 and the corresponding AUL values for all combinations of datasets, optimizers, and $(n, k)$~pairs.

\begin{table}[h]
\centering
\begin{tabular}{lllll}
\toprule
Optimizer & $n$ & $k$ & NDCG@10 & AUL $(\times 10^3)$ \\
\midrule

\multicolumn{5}{c}{\textbf{MovieLens-1M}} \\
\midrule
 & $32$  & $512$ & $0.0339\pm0.0023$ & $\textbf{46.5}\pm\textbf{3.0}$\\
SGD  & $64$  & $256$ & $0.0372\pm0.0018$ & $58.8\pm0.04$ \\
 & $128$ & $128$ & $0.0363\pm0.0036$ & $61.8\pm0.1$ \\
\cmidrule(lr){1-5}
 & $32$  & $512$ & $0.0361\pm0.0023$ & $43.8\pm0.1$ \\
Adam & $64$  & $256$ & $0.0373\pm0.0033$ & $50.6\pm0.5$ \\
 & $128$ & $128$ & $0.0359\pm0.0025$ & $\textbf{41.1}\pm\textbf{0.4}$ \\

\midrule
\rowcolor{verylightgray}
\multicolumn{5}{c}{\textbf{Gowalla}} \\
\midrule
\rowcolor{verylightgray}
 & $32$  & $512$ & $0.0349\pm0.0066$ & $194.8\pm1.7$ \\
\rowcolor{verylightgray}
SGD  & $64$  & $256$ & $0.0498\pm0.0208$ & $116.9\pm12.0$ \\
\rowcolor{verylightgray}
 & $128$ & $128$ & $0.0369\pm0.0047$ & $\textbf{71.9}\pm\textbf{2.8}$ \\
\cmidrule(lr){1-5}
\rowcolor{verylightgray}
 & $32$  & $512$ & $0.0649\pm0.0031$ & $158.0\pm0.1$ \\
\rowcolor{verylightgray}
Adam & $64$  & $256$ & $0.0858\pm0.0041$ & $94.1\pm0.1$ \\
\rowcolor{verylightgray}
 & $128$ & $128$ & $0.0889\pm0.0037$ & $\textbf{43.7}\pm\textbf{0.1}$ \\

\midrule
\multicolumn{5}{c}{\textbf{Netflix}} \\
\midrule
 & $32$  & $512$ & $0.0389\pm0.0025$ & $989.9\pm202.6$ \\
SGD  & $64$  & $256$ & $0.0347\pm0.0004$ & $791.9\pm0.4$ \\
 & $128$ & $128$ & $0.0375\pm0.0006$ & $\textbf{353.7}\pm\textbf{54.1}$ \\
\cmidrule(lr){1-5}
 & $32$  & $512$ & $0.0378\pm0.0008$ & $1191.1\pm194.7$ \\
Adam & $64$  & $256$ & $0.0377\pm0.0012$ & $664.7\pm51.3$ \\
 & $128$ & $128$ & $0.0388\pm0.0019$ & $\textbf{259.2}\pm\textbf{96.7}$ \\

\midrule
\rowcolor{verylightgray}
\multicolumn{5}{c}{\textbf{MovieLens-20M}} \\
\midrule
\rowcolor{verylightgray}
 & $32$  & $512$ & $0.0176\pm0.0014$ & $233.1\pm0.1$ \\
\rowcolor{verylightgray}
SGD  & $64$  & $256$ & $0.0138\pm0.0004$ & $121.0\pm0.04$ \\
\rowcolor{verylightgray}
 & $128$ & $128$ & $0.0093\pm0.0010$ & $\textbf{61.5}\pm\textbf{0.03}$ \\
\cmidrule(lr){1-5}
\rowcolor{verylightgray}
 & $32$  & $512$ & $0.0189\pm0.0010$ & $414.1\pm0.1$ \\
\rowcolor{verylightgray}
Adam & $64$  & $256$ & $0.0197\pm0.0018$ & $218.6\pm0.04$ \\
\rowcolor{verylightgray}
 & $128$ & $128$ & $0.0201\pm0.0000$ & $\textbf{111.6}\pm\textbf{0.02}$ \\

\bottomrule
\end{tabular}
\caption{Test NDCG@10 metric values for different $(n, k)$ configurations across datasets and optimizers. AUL values are reported in thousands. The best train speed convergence value for each configuration is highlighted in bold.}
\label{tab:primary_result}
\end{table}



The empirical results show that in the vast majority of scenarios, especially for a better performing Adam optimizer, a configuration with less $k$ achieves the minimum AUL.

To provide a deeper insight into these optimization pathways, we visualize a representative subset of the validation loss and NDCG@10dynamics in Figures~\ref{fig:loss_dynamics_grid} and \ref{fig:ndcg_dynamics_grid}. The x-axis tracks the exact number of processed batches, while the y-axis shows the comparative loss values.
The trajectories terminate at different lengths because training is performed for a fixed number of epochs, resulting in different numbers of optimization steps due to varying batch sizes.
These trajectories also confirm that smaller $k$-s combined with larger batch sizes $n$ lead to a steeper loss decay. 

\begin{figure}[t]
    \centering

    \includegraphics[width=\columnwidth]{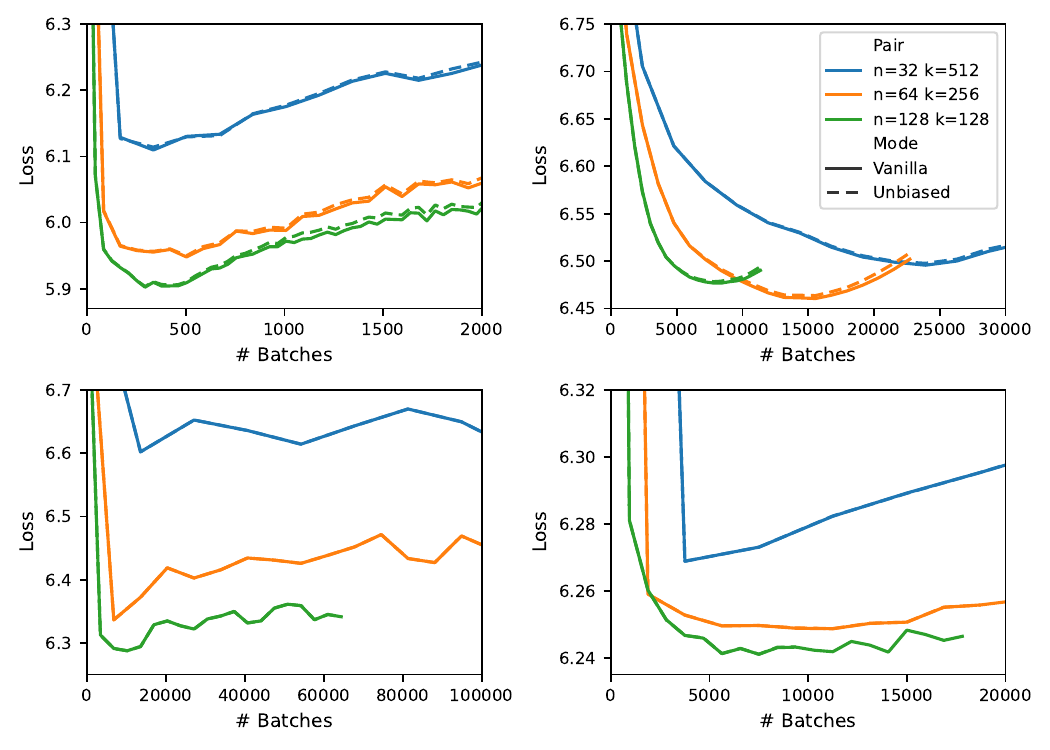}









    \caption{Validation loss dynamics across datasets MovieLens-1M (Top Left), Gowalla (Top Right), Netflix (Bottom Left) and MovieLens-20M (Bottom Right). }
    \label{fig:loss_dynamics_grid}
\end{figure}

\begin{figure}[t]
    \centering

    \includegraphics[width=\columnwidth]{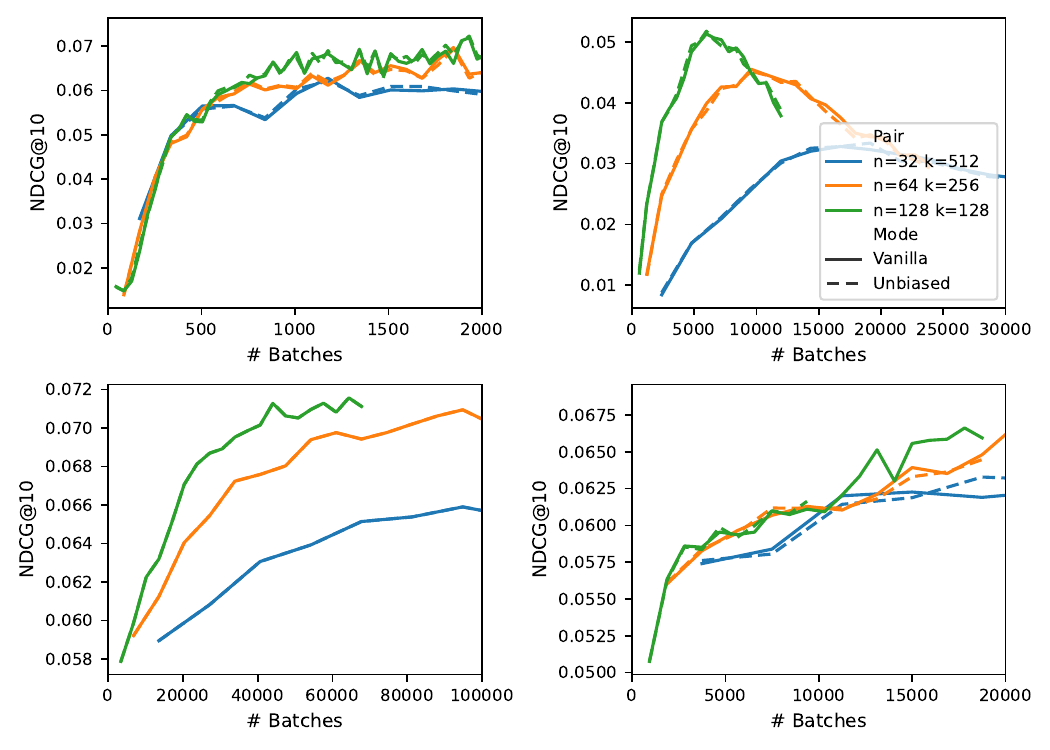}

    \caption{Validation NDCG@10 metric dynamics across datasets MovieLens-1M (Top Left), Gowalla (Top Right), Netflix (Bottom Left) and MovieLens-20M (Bottom Right)}
    \label{fig:ndcg_dynamics_grid}
\end{figure}



\subsection{Discussion and Key Takeaways}

A negative result obtained during evaluation concerns the choice of the objective function. 
When comparing the vanilla cross-entropy loss against the unbiased cross-entropy loss with correction terms (Theorem~\ref{th:unbias}), we observed no statistically significant difference in terms of optimization speed or final recommendation metrics. 
The corrected objective neither accelerates nor slows the training process, and its loss curves completely overlap with the standard baseline. This suggests that while class-imbalance correction is theoretically sound, its practical impact on optimization dynamics within sequential recommendation architectures (like SASRec) remains negligible. 

Contrary, the reduction of the negative sampling class size $k$ in favor of an increased batch size $n$ under a constrained memory budget consistently accelerates convergence. 
Thus, the accelerated convergence is purely driven by the structural optimization benefits of the $(n, k)$ trade-off rather than modifications to the loss landscape.

\section{Conclusion}

We addressed a question in large-scale recommender training on how to optimally allocate a fixed memory budget between batch size and the number of sampled negatives in sampled-softmax objectives to achieve faster convergence. 
The paper formalizes this trade-off as a constrained stochastic optimization problem with $B = n \cdot k$, where $n$ controls mini-batch noise and $k$ controls class-sampling~noise.

Our key contribution is a theoretical analysis showing that these two noise sources play different roles. 
Under standard smoothness and variance assumptions, we demonstrate that we should maximize the number of objects in batch with sampling small number of classes.
This allocation yields near-optimal convergence behavior, providing an actionable guideline.

Extensive experiments on MovieLens-1M and Gowalla with various memory budgets and optimizers confirm our theoretical findings.
There, configurations with minimum possible values of $k$ consistently outperform imbalanced ones in both convergence speed and final recommendation quality across the optimizers considered, including pure SGD and Adam. 

Beyond recommender systems, our results highlight a broader principle for memory-constrained stochastic optimization.
We believe that the presented framework enables direct practical implementation, supports existing empirical claims, and enables further theoretical analysis in more complex training settings.

\section{Acknowledgements} The research was supported by the Russian Science Foundationgrant No. 25-11-00355




\appendix

\section{Proofs}

\subsection{Proof for Theorem~\ref{th:unbias}}
\begin{proof}

The correct estimate of the gradient for cross-entropy loss:
\begin{equation*}
    \mathbf{g} = - \frac{1}{n} \sum\limits_{j \in \batchSet} \left(\mathbf{p_j} - \mathbf{y_j} \right) \  \mathbf{x_j^T}.
\end{equation*}
Instead of $\mathbf{p}$ we use $\mathbf{p}'$, $\mathbb{E}\mathbf{p'} \ne \mathbb{E}\mathbf{p}$, so the gradient estimate is biased. 

The expectations of the sums in the denominators of softmax~\eqref{eq:softmax_via_h} for $\mathbf{p}$ and $\mathbf{p}'$ are respectively:
\begin{equation*}
\mathbb{E} \left[ \sum\limits_{i=1}^{K} h_i \right]= h_{\mathrm{pos}}+(K-1)h_{\mathrm{neg}},\,
\mathbb{E} \left[ \sum\limits_{i \in \mathbb{S}} h_i \right] =  h_{\mathrm{pos}} + (k - 1) h_{\mathrm{neg}} .  
\end{equation*}
If $i \in \mathbb{S}$:
\begin{equation*}
\frac{\mathbb{E}[p_i]}{\mathbb{E}[p_i']} = \frac{h_{\mathrm{pos}}+(k-1)h_{\mathrm{neg}}}{h_{\mathrm{pos}} + (K-1) h_{\mathrm{neg}}} = \frac{\alpha + k - 1}{\alpha + K - 1}.
\end{equation*}

\noindent This is the multiplicative correction factor for the target class. For negative classes, taking into account that they are zeroed with probability $\frac{k}{K}$:

\begin{equation*}
\frac{\mathbb{E}[p_i]}{\mathbb{E}[p_i']} = \frac{K}{k} \cdot \frac{\alpha + k - 1}{\alpha + K - 1}.
\end{equation*}
Since $\alpha + K > \alpha+k \gg 1$, we neglect $-1$ in all expressions and obtain the statement of the theorem.
\end{proof}

\subsection{Proof for Theorem~\ref{th:full_gradient_variance}}

\begin{proof}

The gradient admits the following decomposition with two high level terms:
\begin{align}
\label{eq:gradient_estimation_34}
\mathbf{g} 
&= - \frac{1}{n} \sum_{j \in \batchSet} ( \  \mathbf{p''_j} - \mathbf{y_j)} \  \mathbf{x_j^T} = - \frac{1}{n} \sum_{j \in \batchSet} (\mathrm{diag}(\mathbf{c_j}) \  \mathbf{p'_j} - \mathbf{y_j)} \  \mathbf{x_j^T} \notag \\
&= - \frac{1}{n} \sum_{j \in \batchSet} \mathrm{diag}(\mathbf{c_j}) \  \mathbf{p'_j} \ \mathbf{x_j^T} + \frac{1}{n} \sum\limits_{j \in \batchSet} \mathbf{y_j} \  \mathbf{x_j^T}.
\end{align}

The proof would consists of three parts: separate estimation of the first and the second terms via two lemmas, and union of these estimates into the final results.


\begin{lemma}
In the notation of Theorem above, with $\pCoef = 3 (1 - e^{-\sigma_l^2})$:
\begin{equation}
\label{eq:var_p_for compare}
    \sum_{j \in \batchSet} \mathrm{diag}(\mathbf{c_j}) \  \mathbf{p'_j} \ \mathbf{x_j^T} \approx \mathrm{const} + \frac{\pCoef}{B^2 K} n.
\end{equation}
\end{lemma}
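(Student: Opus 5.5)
The plan is to compute the variance of the first term of \eqref{eq:gradient_estimation_34} entrywise. Fix an example $j$ and a class $i$. The summand in entry $(i,\cdot)$ is $c_{i,j}\, p'_{ji}\, \mathbf{x}_j^T$. The randomness has two sources: the class subset $\mathbb{S}$, and the logits $l_i$, which by A1 are Gaussian with variance $\sigma_l^2$. The computation then has three steps.

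\textbf{Step 1: Fenton--Wilkinson approximation.} Write $p'_{ji} = h_i / S_j$ with $S_j = \sum_{m\in\mathbb{S}} h_m$. Since each $h_m$ is lognormal, I approximate $S_j$ by a lognormal variable with matched first two moments:
\[
\mathbb{E}S_j \approx h_{\mathrm{pos}} + (k-1)e^{\sigma_l^2/2}, \qquad \operatorname{Var}S_j \approx (k-1)\,e^{\sigma_l^2}\bigl(e^{\sigma_l^2}-1\bigr).
\]
Assumption A2 ($\sigma_l^2<1.2$) keeps this approximation accurate. Assumption A3 lets me drop $h_{\mathrm{pos}}$ relative to $K h_{\mathrm{neg}}$.

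\textbf{Step 2: delta method on the ratio.} Apply the delta method to $h_i/S_j$ to get
\[
\operatorname{Var}(p'_{ji}) \approx \frac{\mathbb{E}h_i^2}{(\mathbb{E}S_j)^2} - \frac{(\mathbb{E}h_i)^2}{(\mathbb{E}S_j)^2} + \text{cross terms}.
\]
Then multiply by $c_{\mathrm{neg},j}^2 = (K/k)^2\bigl((\alpha+k)/(\alpha+K)\bigr)^2$, and by the inclusion probability $k/K$ for negatives. The intended effect is that the $k$-dependence collapses, so that after normalizing by $\mathbb{E}h$ the per-entry variance is $\approx (1-e^{-\sigma_l^2})/(kK)$ up to a constant. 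This is where $\pCoef = 3(1-e^{-\sigma_l^2})$ should come from: the factor $e^{\sigma_l^2}-1$ over $e^{\sigma_l^2}$, times roughly three comparable terms (the numerator variance, the denominator variance, and the covariance).

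\textbf{Step 3: sum over the batch.} Summing the independent contributions over $j\in\mathcal{B}$, with the normalization $1/n^2$, gives a variance of the form $\mathrm{const} + n\cdot(\text{per-example term})$. Substituting $k = B/n$ converts $1/(kK)$ into $n/(BK)$. The remaining factor $1/B$ needed to reach $\pCoef\, n/(B^2K)$ comes from the per-example normalization, which I will track through the $1/n^2$ prefactor together with $n = B/k$. The $n$-independent pieces, coming from the variance of $\mathbf{x}$ and from $\mathbb{E}p$, are collected into the constant.

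\textbf{Main obstacle.} I expect the delta-method step to be the hardest. It requires handling the correlation between $h_i$ and $S_j$ when $i\in\mathbb{S}$, and justifying that the random inclusion indicator, the estimated $\hat\alpha_j$, and the lognormal-sum approximation all contribute only lower-order corrections. My first attempt will be to condition on $\mathbb{S}$, apply the law of total variance, and bound the cross terms using A3 so that they fold into the constant factor $3$.
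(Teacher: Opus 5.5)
There is a genuine gap, and it sits in Step 3, where the $n$-dependence is supposed to appear. If the per-example variance is $\approx C/(kK)$, averaging $n$ independent examples gives $\frac{1}{n}\cdot\frac{C}{kK}=\frac{C}{BK}$. This does not depend on $n$ at fixed $B$: the $1/n$ from averaging cancels the factor $n$ you gained from $k=B/n$, rather than supplying an extra $1/B$.

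In the paper this leading piece is exactly the ``const''. Its coefficient is $e^{-\sigma_l^2}$, not $1-e^{-\sigma_l^2}$, and it comes from the Bernoulli inclusion noise $(\mathbb{E}Y)^2\,\mathbb{D}X$, with $X=\mathrm{Be}(k/K)$ and $Y\approx 1/S$, where $S$ is the sum over sampled negatives. The term $\pCoef n/(B^2K)$ arises only from the next order in $1/k$ of the per-example variance,
\[
\mathbb{D}[XY]\approx \frac{e^{-\sigma_l^2}}{kK}\Bigl(1+\frac{3(e^{\sigma_l^2}-1)}{k}\Bigr).
\]
Its $1/k^2$ part, divided by $n$, equals $\frac{3(1-e^{-\sigma_l^2})}{BKk}=\frac{\pCoef n}{B^2K}$. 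So the $k$-dependence must not ``collapse''; you need a second-order expansion in $1/k$.

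This also fixes where the $3$ comes from, and your attribution (numerator variance, denominator variance, covariance) is not it. Under Fenton--Wilkinson, $\mathbb{E}[1/\hat S]=\frac{e^{-\sigma_l^2/2}}{k}\bigl(1+\frac{e^{\sigma_l^2}-1}{k}\bigr)$ and $\mathbb{D}[1/\hat S]\approx\frac{1-e^{-\sigma_l^2}}{k^3}$. The $3=2+1$ splits as follows:
\begin{itemize}
\item $2$ comes from squaring the $O(1/k)$ correction to the \emph{mean} of $1/S$;
\item $1$ comes from $\mathbb{E}[X^2]\,\mathbb{D}[1/S]$.
\end{itemize}
A first-order delta method with $\mathbb{E}[1/S]\approx 1/\mathbb{E}S$ loses the first contribution entirely; a second-order one would recover it. There is also no covariance term. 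The paper holds $h_{i_0}$ fixed, since the randomness is which classes are sampled, so adding a numerator variance $\mathrm{Var}(h_i)$ changes the coefficient.

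Finally, the paper's first step is the exact cancellation $\hat c_{\mathrm{neg}}p'_{i_0}=\frac{K}{k}\cdot\frac{h_{i_0}}{h_{\mathrm{pos}}+K\hat h_{\mathrm{neg}}}\,\mathrm{Be}(k/K)$. It holds because $\hat\alpha$ is built from the same sampled negatives. Only after this step does A3, which compares $h_{\mathrm{pos}}$ with $K h_{\mathrm{neg}}$, let you drop $h_{\mathrm{pos}}$ and reduce to $\mathrm{Be}\cdot 1/S$.

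If you instead multiply $\mathrm{Var}(p')$ by a deterministic $c_{\mathrm{neg}}^2$, the normalizer stays $h_{\mathrm{pos}}+S$ with mean $\approx h_{\mathrm{pos}}+k h_{\mathrm{neg}}$. There A3 does not justify discarding $h_{\mathrm{pos}}$, since that would need $\alpha\ll k$. So the estimated $\hat\alpha$ is the key simplification, not a lower-order nuisance to be bounded away.
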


\begin{proof}

With $\vecX_j$ deterministic and $\vecP''_j$ a random vector, our goal is to obtain the variance of $\vecP''_j$.

For a single element in the batch $(\vecX, \vecY)$, consider the coordinates in $\vecY$ corresponding to negative and target classes separately. 
For a negative class $i_0$, according to the previous theorem:
\begin{align*}
\hat{c}_{\mathrm{neg}} 
&= \frac{K}{k} \cdot \frac{h_{\mathrm{pos}} + k \hat{h}_{\mathrm{neg}}}{h_{\mathrm{pos}} + K \hat{h}_{\mathrm{neg}}}, p_{i_0}' 
= \frac{h_{i_0}}{h_{\mathrm{pos}} + k \hat{h}_{\mathrm{neg}}} \mathrm{Be}\left(\frac{k}{K}\right), \\
\hat{h}_{\mathrm{neg}}
&= \frac{1}{k} \sum_{i \in \mathbb{S}^{\mathrm{neg}}} h_i, 
\end{align*}
where $\mathrm{Be}\left(\frac{k}{K}\right)$ is a Bernoulli random variable characterizing the probability that the class is included in the set $\mathbb{S}$. Here and below, we assume $K \gg k \gg 1$, and, thus, $k - 1 \approx k$.

Then:
\begin{equation*}
\hat{c}_{\mathrm{neg}} p_{i_0}' = \frac{K}{k} \cdot \frac{h_{i_0}}{h_{\mathrm{pos}} + K \hat{h}_{\mathrm{neg}}} \cdot \mathrm{Be}\left(\frac{k}{K}\right)
\end{equation*}

Now, let us derive the variance for it:
\begin{align*}
\mathbb{D} ( \hat{c}_{\mathrm{neg}} p_{i_0}' ) 
&= \mathbb{D} \left[ \frac{K}{k}  \frac{h_{i_0}}{h_{\mathrm{pos}} + K \hat{h}_{\mathrm{neg}}}  \mathrm{Be}\left(\frac{k}{K}\right) \right] \\
&= h_{i_0}^2  \mathbb{D} \left[ \frac{1}{\frac{k}{K} h_{\mathrm{pos}} + \sum_{i \in \mathbb{S}^{\mathrm{neg}}} h_i}  \mathrm{Be}\left(\frac{k}{K}\right) \right].
\end{align*}

Next, we use the formula for the variance of independent random variables:
\begin{align}
\label{eq:var_multip_indep}
    \mathbb{D}[XY] &= \mathbb{D}X \mathbb{D}Y + (\mathbb{E}X)^2 \mathbb{D}Y + (\mathbb{E}Y)^2 \mathbb{D}X \notag \\
    &= (\mathbb{E}X^2)\mathbb{D}Y + (\mathbb{E}Y)^2 \mathbb{D}X.
\end{align}

For the Bernoulli variable $X = \mathrm{Be}\left(\frac{k}{K}\right)$, $\mathbb{E} X^2 = \frac{k}{K}$, $\mathbb{D} X = \frac{k}{K}\left(1-\frac{k}{K}\right)$; we investigate now $Y$. Introduce the variables $a = \frac{k}{K} h_{\mathrm{pos}}$, $S = \sum_{i \in \mathbb{S}^{\mathrm{neg}}} h_i$.
Then, using~A3:
\begin{equation*}
    Y = \frac{1}{h_{\frac{k}{K} \mathrm{pos}} + \sum_{i \in S} h_i} = \frac{1}{a+S} \approx \frac{1}{S}.
\end{equation*}

Since the logits $l_i$ are normally distributed, their exponentials $h_i$ follow a log‑normal distribution. The distribution of a sum of independent log‑normal variables is not analytical.
CLT has slow convergence here due to asymmetry, requiring a large $k$. 
However, under assumptions A1 and A2, the \textit{Fenton–Wilkinson} approximation~\cite{marlow1967normal} works well, i.e., approximating the sum of log‑normals by a single log‑normal with the same expectation and variance as the sum, provide better results than CLT as evident from Figure~\ref{fig:lognorm_sum_approximations}.

\begin{figure}[h]
  \centering
  \includegraphics[width=0.8\linewidth]{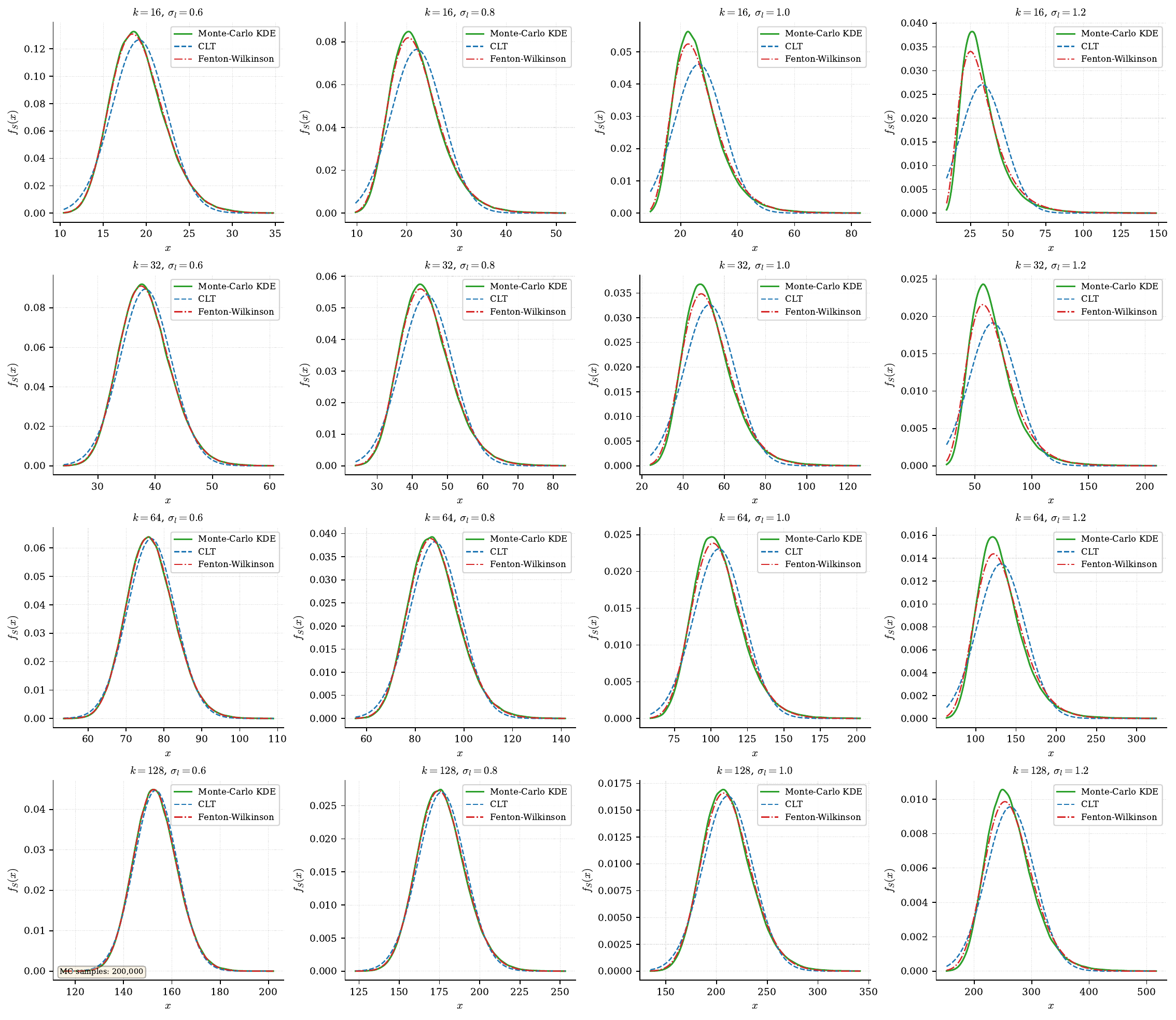}
  \caption{Approximation of the sum of $k$ lognormal random variables $\mathrm{Lognorm}(0, \sigma_l)$ via CLT and Fenton--Wilkinson. The true distribution is reconstructed using Monte Carlo KDE. Better to view in zoom.}
  \label{fig:lognorm_sum_approximations}
  \Description{}
\end{figure}

The Fenton–Wilkinson approximation formulas:
\begin{align*}
    &\sum_{i \in \mathbb{S}^{\mathrm{neg}}} h_i \approx \hat{S} \sim \mathrm{Lognorm}(m, v), \\
    &m = \ln \mathbb{E}[S] - \frac{v}{2}
  = \ln \left(k e^{\sigma_l^2 / 2}\right) - \frac{v}{2}
  = \ln k + \frac{\sigma_l^2}{2} - \frac{v}{2}, \\
    &v = \ln \left(1 + \frac{\mathbb{D}[S]}{\mathbb{E}[S]^2}\right)
     = \ln \left(1 + \frac{e^{\sigma_l^2} - 1}{k}\right).
\end{align*}
Using the properties of the lognormal distribution,  for $Y$ we obtain:
\begin{equation*}
    Y \approx \frac{1}{\hat{S}} \sim \mathrm{LogNorm}(-m, v).
\end{equation*}
Then for the squared expectation we have:
\begin{align*}
\mathbb{E}\left[\frac{1}{\hat{S}}\right] 
&= \exp\left(-m + \frac{v}{2}\right) = \exp\left(-\ln k - \frac{\sigma_l^2}{2} + \frac{v}{2} + \frac{v}{2}\right), \\
\left(\mathbb{E}\left[\frac{1}{\hat{S}}\right]\right)^2 
&= \frac{(k + e^{\sigma_l^2} - 1)^2}{k^4} e^{-\sigma_l^2}.
\end{align*}
For the variance:
\begin{align*}
\mathbb{D}\left[\frac{1}{\hat{S}}\right]  
= e^{-2m+v} \left(e^{v} - 1\right) 
= \frac{(k + e^{\sigma_l^2} - 1)^2}{k^5} \left(1 - e^{-\sigma_l^2} \right).
\end{align*}

Substituting into~\eqref{eq:var_multip_indep}:
\begin{align*}
( \mathbb{D}X + (\mathbb{E}X)^2) \mathbb{D}Y 
&= \frac{(k + e^{\sigma_l^2} - 1)^2}{k^4 K} e^{-\sigma_l^2} (e^{\sigma_l^2} - 1), \\
(\mathbb{E}Y)^2 \mathbb{D}X 
&= \frac{(k + e^{\sigma_l^2} - 1)^2}{k^3 K} e^{-\sigma_l^2} \left(1 - \frac{k}{K}\right), \\
\mathbb{D}[XY] &= \frac{e^{-\sigma_l^2}}{K} \cdot \frac{(k + e^{\sigma_l^2} - 1)^2}{k^3 } \left( \frac{e^{\sigma_l^2} - 1}{k} + 1 - \frac{k}{K} \right).
\end{align*}

Repeating the derivation for the target class $i_0$ analogously to the negative case:
\begin{align*}
\hat{c}_{\mathrm{pos}} 
&= \frac{h_{\mathrm{pos}} + k \hat{h}_{\mathrm{neg}}}{h_{\mathrm{pos}} + K \hat{h}_{\mathrm{neg}}}, \,
p_{i_0}' 
= \frac{h_{i_0}}{h_{\mathrm{pos}} + k \hat{h}_{\mathrm{neg}}}, \\
\hat{c}_{\mathrm{pos}} p_{i_0}'
&= \frac{h_{i_0}}{h_{\mathrm{pos}} + K \hat{h}_{\mathrm{neg}}}, \\
\mathbb{D} ( \hat{c}_{\mathrm{pos}} p_{i_0}' ) 
&= \mathbb{D} \left[\frac{h_{i_0}}{h_{\mathrm{pos}} + K \hat{h}_{\mathrm{neg}}} \right] \\
&= \frac{k^2}{K^2} h_{i_0}^2 \mathbb{D} \left[ \frac{1}{\frac{k}{K} h_{\mathrm{pos}} + \sum_{i \in \mathbb{S}^{\mathrm{neg}}} h_i} \right].
\end{align*}

The first factor is infinitesimally small compared to the negative coordinates because $k \ll K$; the contribution to the final gradient norm comes precisely from the negative coordinates, and we focus on them.

If there is not one but $n$ elements in the batch, we have:
\begin{align*}
    \mathbb{D}\left[ \frac{1}{n} \sum\limits_{j \in \batchSet} p_{j, i_0}'' \right] 
    &= \frac{1}{n}\mathbb{D}[XY] = \frac{k}{B}\mathbb{D}[XY] \\ 
    &= \frac{e^{-\sigma_l^2}}{BK} \cdot \frac{(k + e^{\sigma_l^2} - 1)^2}{k^2} \left( \frac{e^{\sigma_l^2} - 1}{k} + 1 - \frac{k}{K} \right).
\end{align*}

For $k \gg 1$:
\begin{equation*}
    \frac{(k + e^{\sigma_l^2} - 1)^2}{k^2} = \left( 1 + \frac{e^{\sigma_l^2} - 1}{k}\right)^2 = 1 + \frac{2 (e^{\sigma_l^2} - 1) }{k} + O \left( \frac{1}{k^2} \right), 
\end{equation*}
Thus, for $K \gg k \gg 1$:
\begin{align*}
\label{eq:var_p_for compare_2}
    \mathbb{D}\left[ \frac{1}{n} \sum\limits_{j \in \batchSet} p_{j, i_0}'' \right] 
    &= \frac{e^{-\sigma_l^2}}{BK} \left(1 + \frac{2 (e^{\sigma_l^2} - 1) }{k} + O \left( \frac{1}{k^2} \right)\right) \left(1 + \frac{e^{\sigma_l^2} - 1}{k} - \frac{k}{K} \right) \\ 
    &\approx \mathrm{const} + \frac{3 e^{-\sigma_l^2}}{BK} \frac{e^{\sigma_l^2} - 1}{k} 
    = \mathrm{const} + \frac{3(1 - e^{-\sigma_l^2})}{B^2 K} n. \\
\end{align*}

Thus, for $\pCoef = 3(1 - e^{-\sigma_l^2}) > 0$, we get the Lemma statement.
\end{proof}


\begin{lemma}
In the notation of Theorem above, with $\yCoef = \mathbb{E} x^2_r$:
\begin{equation}
\label{eq:var_y_for_compare}
     \mathbb{D}\left[\frac{1}{n}
        \sum_{j \in \batchSet} y_{j, i_0} x_{j, r} \right] \approx \frac{\yCoef}{K n}.
\end{equation}

\end{lemma}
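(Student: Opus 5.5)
We need the variance of the second term in \eqref{eq:gradient_estimation_34}, restricted to a single coordinate $(i_0, r)$: the class index $i_0$ and the feature index $r$. Since this term involves only the true labels and the inputs, no class sampling enters it. The plan is to treat each summand $y_{j,i_0} x_{j,r}$ as an i.i.d. random variable over the draw of the example $j$, because we assumed independent sampling in the mini-batch. Then
\[
\mathbb{D}\Big[\frac{1}{n}\sum_{j\in\batchSet} y_{j,i_0}x_{j,r}\Big] = \frac{1}{n}\,\mathbb{D}\big[y_{i_0}x_r\big],
\]
so everything reduces to computing the single-example variance $\mathbb{D}[y_{i_0}x_r]$.

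For that, I will use $y_{i_0}\in\{0,1\}$, so $y_{i_0}^2=y_{i_0}$. This gives
\[
\mathbb{D}[y_{i_0}x_r]=\mathbb{E}[y_{i_0}x_r^2]-\big(\mathbb{E}[y_{i_0}x_r]\big)^2 .
\]
Conditioning on $\vecX$, $\mathbb{E}[y_{i_0}\mid \vecX]=\pi_{i_0}(\vecX)$, where $\boldsymbol{\pi}=\mathrm{softmax}(\matW^*\vecX)$ by the multinomial model of the setup. Hence
\[
\mathbb{E}[y_{i_0}x_r^2]=\mathbb{E}[\pi_{i_0}(\vecX)x_r^2].
\]
Under A1 (centered Gaussian logits) and A2 (bounded logit variance), the class probabilities fluctuate only mildly around the uniform value $1/K$. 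To leading order I would therefore replace $\pi_{i_0}(\vecX)$ by $1/K$, yielding $\mathbb{E}[y_{i_0}x_r^2]\approx \mathbb{E}x_r^2/K$.

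The squared-mean term $(\mathbb{E}[\pi_{i_0}x_r])^2$ is of order $1/K^2$. For large $K$ it is therefore negligible against $\mathbb{E}x_r^2/K$, which is the same kind of approximation used in the previous lemma. Combining the two steps gives $\mathbb{D}[y_{i_0}x_r]\approx \yCoef/K$ with $\yCoef=\mathbb{E}x_r^2$, and dividing by $n$ yields \eqref{eq:var_y_for_compare}.

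The main obstacle is justifying the replacement of $\pi_{i_0}(\vecX)$ by $1/K$ inside $\mathbb{E}[\pi_{i_0}x_r^2]$, since $\pi_{i_0}$ and $x_r$ are correlated through $\matW^*\vecX$. My first approach is to write $\pi_{i_0}=h_{i_0}/\sum_i h_i$ and approximate the denominator by its mean $K e^{\sigma_l^2/2}$; this is a law-of-large-numbers step, justified for large $K$ just as in the Fenton--Wilkinson argument. The expectation then becomes $\mathbb{E}[e^{l_{i_0}}x_r^2]/(K e^{\sigma_l^2/2})$. A Gaussian change of measure shows this equals $\mathbb{E}x_r^2/K$ up to a factor $1+O(\sigma_l^2)$, and that factor stays bounded by A2. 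This correction can be absorbed into $\yCoef$, which only needs to be a positive constant of the stated order for Theorem~\ref{th:full_gradient_variance}.
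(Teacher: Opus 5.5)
Your proposal is correct at the paper's level of rigor, but it handles the one nontrivial step differently. You and the paper share the first reduction, $\mathbb{D}\bigl[\frac1n\sum_{j} y_{j,i_0}x_{j,r}\bigr]=\frac1n\mathbb{D}[y_{i_0}x_r]$.

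\textbf{How the paper proceeds.} It \emph{assumes} that $y_{i_0}$ and $x_r$ are independent and applies the product-variance formula $\mathbb{D}[y_{i_0}x_r]=\pi_{i_0}\mathbb{D}[x_r]+(\mathbb{E}x_r)^2\pi_{i_0}(1-\pi_{i_0})$. It then imposes $\pi_{i_0}\approx 1/K$ as an explicit uniform-class assumption and drops the $1/K^2$ piece.

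\textbf{How you proceed.} You condition on $\vecX$ and keep the dependence of $\pi_{i_0}(\vecX)$ on the input, which the model $\boldsymbol{\pi}=\mathrm{softmax}(\matW^*\vecX)$ actually builds in. The paper's independence assumption does not literally hold under that model. You then recover the $1/K$ scaling from a mean-field approximation of the softmax denominator plus Gaussian tilting.

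\textbf{What each route buys.} The paper's route is short and lands exactly on $\yCoef=\mathbb{E}x_r^2$. Yours justifies replacing $\pi_{i_0}$ by $1/K$ inside the $x_r^2$-weighted expectation, but needs hypotheses the paper does not state:
\begin{itemize}
\item The identity $\mathbb{E}[e^{l_{i_0}}x_r^2]=e^{\sigma_l^2/2}\,\mathbb{E}\bigl[(x_r+\mathrm{Cov}(x_r,l_{i_0}))^2\bigr]$ requires $(l_{i_0},x_r)$ to be jointly Gaussian. A1 only says the logits are normal, and it refers to the model's logits rather than $\matW^*\vecX$.
\item Your law-of-large-numbers step for the denominator treats the $K$ logits as roughly independent. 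This is the same heuristic the paper uses in its Fenton--Wilkinson step.
\end{itemize}

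\textbf{The constant you actually get.} Your computation yields the tilted second moment, not $\mathbb{E}x_r^2$ itself. For centered $\vecX$ the extra term is $\mathrm{Cov}(x_r,l_{i_0})^2\le\sigma_l^2\,\mathbb{D}[x_r]$, and in general the correction is a bounded factor under A2. So you prove the lemma with a slightly different constant of the same order. As you note, this does not affect Theorems~\ref{th:full_gradient_variance} and~\ref{th:optimal_n_k}, which use only the $1/(Kn)$ scaling.
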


\begin{proof}

Since the examples in the mini-batch are independent, we have
\[
    \mathbb{D}\left[\frac{1}{n}
        \sum_{j \in \batchSet} y_{j, i_0} x_{j, r}
    \right] =
    \frac{1}{n}
    \mathbb{D}\left[ y_{i_0} x_r \right].
\]
Given the number of classes and inputs, we can assume that $y_{i_0}$ and $x_r$ are independent. 
The variance formula for a product of independent random variables give us
\[
    \mathbb{D}\left[y_{i_0} x_r \right] \approx
    \mathbb{E}[y_{i_0}^2] \mathbb{D}[x_r]
    + \left(\mathbb{E} x_r\right)^2
    \mathbb{D}[y_{i_0}].
\]
Since $y_{i_0}$ is Bernoulli, $y_{i_0}^2 = y_{i_0}$. Therefore,
\[
    \mathbb{E}[y_{i_0}^2]
    =
    \mathbb{E}[y_{i_0}]
    =
    \pi_{i_0}, \,
    \mathbb{D}[y_{i_0}]
    =
    \pi_{i_0}(1 - \pi_{i_0}).
\]
Thus,
\[
    \mathbb{D}\left[
        y_{i_0}x_r
    \right]
    =
    \pi_{i_0}\mathbb{D}[x_r]
    +
    \left(\mathbb{E}x_r\right)^2
    \pi_{i_0}(1-\pi_{i_0}).
\]
Consequently,
\[
    \mathbb{D}\left[
        \frac{1}{n}
        \sum_{j\in\batchSet}
        y_{j,i_0}x_{j,r}
    \right]
    =
    \frac{1}{n}
    \left[
        \pi_{i_0}\mathbb{D}[x_r]
        +
        \left(\mathbb{E}x_r\right)^2
        \pi_{i_0}(1-\pi_{i_0})
    \right].
\]
Under the approximately uniform-class assumption $\pi_{i_0}\approx 1/K$,
this becomes
\[
    \mathbb{D}\left[
        \frac{1}{n}
        \sum_{j\in\batchSet}
        y_{j,i_0}x_{j,r}
    \right]
    \approx
    \frac{1}{n}
    \left[
        \frac{1}{K}\mathbb{D}[x_r]
        +
        \left(\mathbb{E}x_r\right)^2
        \frac{1}{K}
        \left(1-\frac{1}{K}\right)
    \right].
\]
Given $1 - \frac{1}{K} \approx 1$ for large $K$ we obtain:
$
    \mathbb{D}\left[
        \frac{1}{n}
        \sum_{j\in\batchSet}
        y_{j,i_0}x_{j,r}
    \right]
    \approx \frac{1}{n K}
    \mathbb{E}x^2_r
$.
Thus, for $\yCoef = \mathbb{E}x^2_r$, we obtain the statement of the Lemma.
\end{proof}

Combining \eqref{eq:var_p_for compare} and \eqref{eq:var_y_for_compare}, obtained in two previous Lemmas, we get the statement of the theorem with specific positive $\yCoef$ and $\pCoef$:
\[
    \operatorname{Var}(\mathbf{g}_{n,k})
    \leq
    \frac{\yCoef}{K n}
    +
    \frac{\pCoef}{B K k}
\]

\end{proof}

\subsection{Proof for Theorem~\ref{th:optimal_n_k}}

\begin{proof}

The proof consists of solving the constrained optimization problem for positive $\yCoef$ and $\pCoef$:
    \[
        \min_{n > 0, k > 0}
        \left\{
    \frac{\yCoef}{K n} +
    \frac{\pCoef}{B K k}
        \right\}
        \quad
        \text{subject to}
        \quad n k = B.
    \]
Using the constraint $n k = B$, we get $k = \frac{B}{n}$.
Thus, we have a one-dimensional objective
\[
    \phi(n) =
    \frac{A_1}{n}
    + \frac{A_2}{B} n
\]
with positive $A_1 = \frac{\yCoef}{K}$ and $A_2 = \frac{\pCoef}{B K}$.

Differentiating with respect to $n$ and making the derivative zero, we obtain the the candidate minimizer with $\phi'(n) = 0$ and 
$n > 0$:
\[
    n^\star = \sqrt{\frac{A_1}{A_2} B}.
\]
Taking the second derivative, we easily see that it is bigger than $0$.
So, we found a unique global minimizer.
Therefore,
\[
    n^\star =
    \sqrt{\frac{\yCoef}{\pCoef}} B,
    \qquad k^\star =
    \sqrt{\frac{\pCoef}{\yCoef}}.
\]
For typically large $B$ under assumptions about $n, k, B$ we have 
\[
    n^\star \asymp B, \qquad k^\star \asymp 1.
\] 
\end{proof}

\bibliographystyle{ACM-Reference-Format}
\bibliography{sample-base}

\end{document}